\def\eqref#1{equation~\ref{#1}}
\def\1{\bm{1}}
\def\vb{{\bm{b}}}
\def\vh{{\bm{h}}}
\def\vx{{\bm{x}}}
\def\vy{{\bm{y}}}
\def\vz{{\bm{z}}}
\def\evb{{b}}
\def\evg{{g}}
\def\evh{{h}}
\def\evx{{x}}
\def\evy{{y}}
\def\evz{{z}}
\def\mW{{\bm{W}}}
\DeclareMathAlphabet{\mathsfit}{\encodingdefault}{\sfdefault}{m}{sl}
\SetMathAlphabet{\mathsfit}{bold}{\encodingdefault}{\sfdefault}{bx}{n}
\def\sL{{\mathbb{L}}}
\def\sN{{\mathbb{N}}}
\def\sS{{\mathbb{S}}}
\def\sX{{\mathbb{X}}}
\newtheorem{lem}{Lemma}
\newtheorem{thm}[lem]{Theorem}
\newtheorem{prp}[lem]{Proposition}
\definecolor{color1}{HTML}{E41A1C}
\definecolor{color2}{HTML}{377DB8}
\definecolor{color3}{HTML}{4DAF4A}
\definecolor{color4}{HTML}{984EA3}
\definecolor{color5}{HTML}{FF7F00}
\definecolor{color6}{HTML}{A65628}
\definecolor{color2b}{HTML}{98BFE0}
\newtheorem*{ref@theorem}{\ref@title}
\newcommand{\newreftheorem}[2]{%
\newenvironment{ref#1}[1]{%
 \def\ref@title{#2 \ref{##1}}%
 \begin{ref@theorem}}%
 {\end{ref@theorem}}}
\title{Empirical Bounds on Linear Regions of Deep Rectifier Networks}
\author{Thiago Serra,\textsuperscript{\rm 1} Srikumar Ramalingam\textsuperscript{\rm 2}\\ % All authors must be in the same font size and format. Use \Large and \textbf to achieve this result when breaking a line
\textsuperscript{\rm 1}Bucknell University, USA\\ 
\textsuperscript{\rm 2}The University of Utah, USA\\
thiago.serra@bucknell.edu, srikumar@cs.utah.edu 
}
\pgfplotsset{compat=1.14}
\begin{document}

\maketitle

\begin{abstract}
% Motivation
We can compare the expressiveness of neural networks that use rectified linear units (ReLUs) by the number of linear regions, which reflect the number of pieces of the piecewise linear functions modeled by such networks. 
% Problem
However, enumerating these regions is prohibitive  
and the known analytical bounds 
are identical for networks 
with 
same dimensions. 
% Approach
In this work, we approximate the number of linear regions 
through empirical bounds based on features of the trained network and probabilistic inference.
% Results & Conclusion
Our first contribution is a method to sample the activation patterns defined by ReLUs using universal hash functions. 
This method is based on a Mixed-Integer Linear Programming~(MILP) formulation of the network 
and 
an algorithm for probabilistic lower bounds of MILP solution sets that we call MIPBound, 
which is 
considerably 
faster than exact counting and 
reaches values in 
similar orders of magnitude. 
Our second contribution is 
a tighter 
activation-based bound for the maximum number of linear regions, 
which is 
particularly  stronger in networks with narrow layers. 
Combined, 
these bounds yield a fast proxy for the number of linear regions 
of a deep neural network.
\end{abstract}

\section{Introduction}

Neural networks with piecewise linear activations have become increasingly more common along the past decade, 
in particular since~\cite{ReLUGood1,ReLUGood2}. %\citeauthor{ReLUGood1}~\citeyearpar{ReLUGood1} and \citeauthor{ReLUGood2}~\citeyearpar{ReLUGood2}. 
The simplest and most commonly used among such forms of activation is the Rectifier Linear Unit~(ReLU), which outputs the maximum between 0 and its input argument~\cite{OriginReLU,CurrentDNN}. In the functions modeled by these networks with ReLUs, we can associate each part of the domain in which the network corresponds to an affine function with a particular set of units having positive outputs. We say that those are the active units for that part of the domain. Consequently, 
over its entire input domain, 
the network models a piecewise-linear function~\cite{Arora}. Counting these ``pieces'' into which the domain is split, which are often denoted as linear regions, % or decision regions, 
is one way to compare the expressiveness of models defined by networks with different configurations or coefficients. The theoretical 
analysis of the number of input regions in deep learning dates back to at least~\cite{DeepArchitectures}, %\citeauthor{DeepArchitectures}~\citeyearpar{DeepArchitectures}, 
and %more recently %~\cite{BoundingCounting} %\citeauthor{BoundingCounting}~\citeyearpar{BoundingCounting} 
%
%some 
recent experiments have shown 
%have shown empirical evidence 
that the number of regions 
relates to the accuracy of %rectifier 
similar 
networks~\cite{BoundingCounting}.

The study of linear regions in different network configurations has led to some interesting observations. For example, 
in a rectifier network with~$n$~ReLUs, we 
learned that not all configurations -- and in some cases none -- can reach the ceiling of $2^n$ regions 
(the number of possible sets of active units). 
On the one hand, we can construct 
networks where the number of regions is exponential on network depth~\cite{Pascanu,Montufar14}. 
On the other hand, there is a bottleneck effect by which the number of active units on each layer affects how the regions are partitioned by subsequent layers  due to the dimension of the space containing the image of the function, up to the point that even shallow networks define more linear regions than their deeper counterparts as the input dimension approaches  $n$~\cite{BoundingCounting}. 
Due to the linear local behavior, the size and shape of linear regions have been explored for  
provable robustness.  In that case, one wants to identify large stable regions and move certain points away from their boundaries~\cite{WongLP,LargeMargin,Maximization,LocallyLinear}. 
To some extent, we may regard the number and the geometry of linear regions as complementary. 
We can also use linear regions of a network to obtain smaller networks that are equivalent or a global approximation~\cite{Transformations}. 
However, we need faster methods to count or reasonably approximate the number of linear regions to make such metric practical.

\subsection{Linear Regions and Network Expressiveness}

The literature on counting linear regions has mainly focused on bounding their maximum number. Lower bounds 
are obtained by constructing networks defining increasingly larger number of linear regions~\cite{Pascanu,Montufar14,Arora,BoundingCounting}. 
Upper bounds are proven using the theory of hyperplane arrangements~\cite{Zaslavsky}  %by~\citeauthor{Zaslavsky}~\citeyearpar{Zaslavsky} 
along with other analytical 
insights~\cite{Raghu,Montufar17,BoundingCounting}. So far, these bounds are only identical -- and thus tight -- in the case 
of one-dimensional inputs~\cite{BoundingCounting}. Both of these lines have explored deepening connections with 
polyhedral theory, but some of these results have also been recently revisited using tropical 
algebra~\cite{TropicalChicago,TropicalCornell}. 
The 
linear regions of a trained network 
can be enumerated  
as the projection of a Mixed-Integer Linear Program~(MILP) on the binary variables defining if each unit is active~\cite{BoundingCounting}.
Another recent line of work focuses on analytical results for the average number of linear regions in practice~\cite{Hanin2019a,Hanin2019b}.

Other methods to study neural network expressiveness include universal approximation theory~\cite{Cybenko1989}, 
VC dimension~\cite{Bartlett1998}, and trajectory length~\cite{Raghu}. A network architecture can be studied and analyzed based on the largest class of functions that it can approximate. 
For example, it has been shown that any continuous function can be modeled using a single hidden layer of sigmoid activation functions~\cite{Cybenko1989}. 
The popular ResNet architecture~\cite{He2016DeepRL} with a single ReLU in every hidden layer can be a universal approximator~\cite{Lin2018}. Furthermore, a rectifier network with a single hidden layer can be trained to global optimality in polynomial time on the data size, but exponential on the input dimension~\cite{Arora}. The use of trajectory length for expressiveness is related to linear regions, i.e., by changing the input along a one dimensional path we study the transition across linear regions. Certain critical network architectures using leaky ReLUs %($f(x)=max(x,\alpha x),\alpha \in (0,1)$) 
are identified to produce connected decision regions~\cite{Nguyen2018}. To avoid such degenerate cases, one needs to use sufficiently wide hidden layers. However, this result is mainly applicable to leaky ReLUs and not to standard ReLUs~\cite{Beise2018}.

\subsection{Counting and Probabilistic Inference}\label{sec:sat_lb}

%We show that approximate model counting of propositional satisfiability formulas (SAT) 
%can be adapted to estimate the set of solutions on integer variables of MIP formulations. 

Approximating the size of a set of binary vectors, such as those units that are active on each linear region of a neural network, has been extensively studied in the context of propositional satisfiability~(SAT). 
%We can think of SAT as a particular form of encoding solutions on a set $V$ of Boolean variables, 
%where the solutions have to satisfy a set of predicates, 
%and which can therefore represent solutions on binary variables of an MILP. 
%\citeauthor{Toda}~\citeyearpar{Toda} has shown that counting 
A SAT formula on a set $V$ of Boolean variables has to satisfy a set of predicates. 
Counting solutions of SAT formulas is \#P-complete~\cite{Toda},  
%However, thanks to the improving performance of SAT solvers, 
%many practical approaches to 
but one can 
approximate the number of solutions %have been proposed since~\cite{MBound}, %\citeauthor{MBound}~\citeyearpar{MBound}, 
%all of which 
by 
making a relatively small number of solver calls to %solve 
restricted formulas. 
%
%
%
%The idea in this line of work is to use 
This line of work relies on 
hash functions with good statistical properties 
to partition the set of solutions $S$ into subsets having approximately half of the solutions each. 
After restricting a given formula $r$ times to %one of such subsets, 
either subset, 
one can test if the subset is empty.  
%We may intuitively assume 
%that, with some probability, 
Intuitively, 
$|S| \geq 2^r$ with some probability if these subsets are more often nonempty,  
or else $|S| < 2^r$. % with some probability. 
%Most of the literature has restricted 
SAT formulas 
are often restricted 
with predicates that encode XOR constraints, 
which can be interpreted in terms of $0$--$1$ variables 
as restricting the sum of a subset $U \subset V$ of the variables to be either even or odd.  
%
%Probabilistic lower bounds 
%can be obtained using XOR constraints on subsets of fixed or variable sizes, 
%say $k = |U|$. 
%Although these bounds get better as $k$ increases, 
%even small values of $k$ yield good approximations in practice~\cite{ShortXORs}. 
%We refer the reader to Appendix~\ref{ap:xor} for a survey on XOR constraints and approximate model counting.
%
%
%
%\subsection{Model Counting in SAT}
%
%\citeauthor{UniversalHashing}~\citeyearpar{UniversalHashing} have shown that 
XOR constraints are universal hash functions~\cite{UniversalHashing},  
%With them,  
%\citeauthor{Sipser}~\citeyearpar{Sipser} and \citeauthor{Stockmeyer}~\citeyearpar{Stockmeyer} have shown that approximate counting can be done in polynomial time with an NP-oracle, 
which %can be used for 
enable 
approximate counting in polynomial time with an NP-oracle~\cite{Sipser,Stockmeyer}. 
%whereas \citeauthor{VV}~\citeyearpar{VV} have shown that 
Interestingly, 
formulas with a unique solution are as hard as those with multiple solutions~\cite{VV}. 
From a theoretical standpoint, 
such approximations are thus not much harder than obtaining a feasible solution.

%The seminal work by \citeauthor{MBound}~\citeyearpar{MBound} introduced the MBound algorithm, 
In the seminal MBound algorithm~\cite{MBound}, 
%where 
XOR constraints on sets of variables with a fixed size $k$ 
yield the probability that $2^r$ is either a lower or an upper bound. 
The probabilistic lower bounds are always valid but get better as $k$ increases, 
whereas the probabilistic upper bounds are only valid if $k = |V|/2$. 
%However, \citeauthor{ShortXORs}~\citeyearpar{ShortXORs} have shown that the 
In practice, these lower bounds can be good for small %values of 
$k$~\cite{ShortXORs}. 
These ideas were later extended to constraint satisfaction problems~\cite{CountingCSP}. 
%
%
%
%Work on this topic has gradually shifted to more precise estimates 
%and to reducing the value of $k$ needed to obtain valid upper bounds. 
Some of the subsequent work has been influenced by uniform sampling results from~\cite{NUSampling},  %\citeauthor{NUSampling}~\citeyearpar{NUSampling}, 
where the fixed size $k$ is replaced with an independent probability $p$ of including each variable in each  XOR constraint. 
That includes the ApproxMC and the WISH algorithms~\cite{ApproxMC,ErmonWish1}, 
which rely on finding more solutions of the restricted formulas 
but generate $(\sigma, \epsilon)$ certificates 
by which, 
with probability $1 - \sigma$, the result is within $(1 \pm \epsilon)|S|$. 
%Later, \citeauthor{ErmonWish2}~\citeyearpar{ErmonWish2} and \citeauthor{Zhao}~\citeyearpar{Zhao} aimed at providing 
Later work has provided 
upper bound guarantees when $p < 1/2$, 
showing that the size of those sets can be $\Theta\big(log (|V|)\big)$~\cite{ErmonWish2,Zhao}. 
Others have tackled this issue differently. 
%\citeauthor{MinimalIndependentSupport}~\citeyearpar{MinimalIndependentSupport} and \citeauthor{MinimalIndependentSupportExp}~\citeyearpar{MinimalIndependentSupportExp} have limited 
One approach limited 
the counting to any set of variables $I$ for which any assignment leads to at most one solution in $V$, 
denoting those as minimal independent supports~\cite{MinimalIndependentSupport,MinimalIndependentSupportExp}. 
%\citeauthor{ErrorCorrecting}~\citeyearpar{ErrorCorrecting} and \citeauthor{FastFlexible}~\citeyearpar{FastFlexible} have broken 
Another approach broke 
with the independent probability $p$ 
by using each variable the same number of times across $r$ XOR constraints~\cite{ErrorCorrecting,FastFlexible}. 
Related work on MILP has only focused on upper bounds based on relaxations~\cite{UpperMIP}. 

\subsection{Contributions of This Paper}

%Although the number of linear regions has been long conjectured and recently shown to work for comparing similar networks, 
%we need faster methods to count or reasonably approximate that number to make such metric practical. 
%Our approach %in this paper 
%consists of introducing empirical upper and lower bounds, 
%both of which 
We propose empirical bounds 
based on the weight and bias coefficients of trained networks, 
which are the first able to compare networks with same configuration of layers. 
We also 
%reframe the problem of determining 
suggest replacing 
the potential number of linear regions $N$ of an architecture with %that of estimating 
the \emph{Minimum Activation Pattern Size}~(MAPS) $\eta = \log_2 N$. % of a network. 
This value can be interpreted as the %minimum 
number of units that any network should have in order to define as many linear regions as another network when adjacent linear regions map to distinct affine functions.

Our main technical contributions are the following:

\begin{enumerate}[(i)]

\item We 
introduce a probabilistic lower bound based on %universal hash functions that sample 
sampling 
the activation patterns of~the~trained network. 
More generally, we can approximate %the number of 
solutions of MILP formulations more efficiently than if directly extending SAT-based methods with the  \texttt{MIPBound} algorithm introduced in Section~\ref{sec:lb}. 
See results in Figure~\ref{fig:bounds}. % and \ref{fig:times}.

\item We refine the best known upper bound by further exploring how units partition the input space. With the theory in Section~\ref{sec:ub}, we find that unit activity further contributes to the bottleneck effect caused by narrow layers (those with few units). 
See results in Table~\ref{tab:ub_gap}.

\end{enumerate}

\section{Preliminaries and Notations}

%In this paper, 
We consider feedforward Deep Neural Networks~(DNNs) with ReLU activations. Each network has $n_0$ input variables given by $\vx = [\evx_1 ~ \evx_2 ~ \dots ~ \evx_{n_0}]^T$ with a bounded domain $\sX$ and $m$ output variables given by $\vy = [\evy_1 ~ \evy_2 ~ \dots ~ \evy_m]^T$. Each hidden layer $l \in \sL = \{1,2,\dots,L\}$ has $n_l$ hidden neurons indexed by $i \in \sN_l = \{1, 2, \ldots, n_l\}$ with  outputs given by $\vh^l = [\evh_1^l ~ \evh_2^l \dots \evh_{n_l}^l]^T$. 
For notation simplicity, we may use $\vh^0$ for $\vx$ and $\vh^{L+1}$ for $\vy$. Let $\mW^l$ be the $n_l \times n_{l-1}$ matrix where each row corresponds to the weights of a neuron of layer $l$. 
Let $\vb^l$ be the bias vector used to obtain the activation functions of neurons in layer $l$. The output of unit $i$ in layer $l$ consists of an affine transformation $\evg_i^l = \mW_{i}^l \vh^{l-1} + \vb_i^l$ to which we apply the ReLU activation $\evh_i^l = \max\{0, \evg_i^l\}$. 

We regard the DNN as a piecewise linear function $F:\mathbb{R}^{n_0}\rightarrow \mathbb{R}^{m}$ that maps the input $\vx \in \sX \subset \mathbb{R}^{n_0}$ to $\vy \in \mathbb{R}^m$. 
Hence, the domain is partitioned into regions within which $F$ corresponds to an affine function, which we denote as linear regions. 
Following the 
%same convention as \citeauthor{Raghu}~\citeyearpar{Raghu}, \citeauthor{Montufar17}~\citeyearpar{Montufar17}, and \citeauthor{BoundingCounting}~\citeyearpar{BoundingCounting}, 
literature convention~\cite{Raghu,Montufar17,BoundingCounting}, 
we characterize each linear region by the set of units that are active 
in that domain. 
For each layer $l$, let $\sS^l \subseteq \{1,\ldots, n_l\}$ be the activation set in which  $i \in S^l$ if and only if $\evh^l_i > 0$. 
Let $\mathcal{S} = (\sS^1, \ldots, \sS^{l})$ be the activation pattern aggregating those activation sets. 
Consequently, the number of linear regions defined by the DNN %corresponds to 
is 
the number of nonempty sets in $\vx$ among all possible activation patterns.

\section{Counting and MILP Formulations}\label{sec:mip}

We can represent each linear region defined by a rectifier network with $n$ hidden units on domain $\sX$ by a distinct vector in $\{0,1\}^n$, 
where each element denotes if %the corresponding 
a 
unit is active or not. 
%\citeauthor{BoundingCounting}~\citeyearpar{BoundingCounting} have shown that such 
Such 
vector can be embedded into an MILP formulation mapping 
network 
inputs to outputs~\cite{BoundingCounting}. %of a rectifier network. 
For a neuron $i$ in layer $l$, %this mapping uses 
we use 
such binary variable $\evz_i$, 
vector $\vh^{l-1}$ of inputs coming from layer $l-1$, 
variable $\evg^l_i$ for the value of the affine transformation $\mW_i^l \vh^{l-1} + \vb_i^l$, 
variable $\evh^l_i = \max\{0, \evg_i^l\}$ denoting the output of the unit, 
and a variable $\bar{\evh}^l_i$ denoting the output of a complementary fictitious unit $\bar{\evh}^l_i = max\left\{0, -\evg_i^l \right\}$: 
%The formulation is as follows:
\begin{align}
    \mW_i^l \vh^{l-1} + \evb_i^l = \evg_i^l \label{eq:mip_unit_begin} \\
    \evg_i^l = \evh_i^l - \bar{\evh}_i^l  \label{eq:mip_after_Wb_begin} \\
    \evh_i^l \leq H_i^l \evz_i^l, \qquad 
    \bar{\evh}_i^l \leq \bar{H}_i^l (1-\evz_i^l) \\
    \evh_i^l \geq 0, \qquad % \\
    \bar{\evh}_i^l \geq 0, \qquad % \\
    \evz_i^l \in \{0, 1\} \label{eq:mip_unit_end}
\end{align}

\begin{figure}[b]
\centering
\includegraphics[scale=0.17]{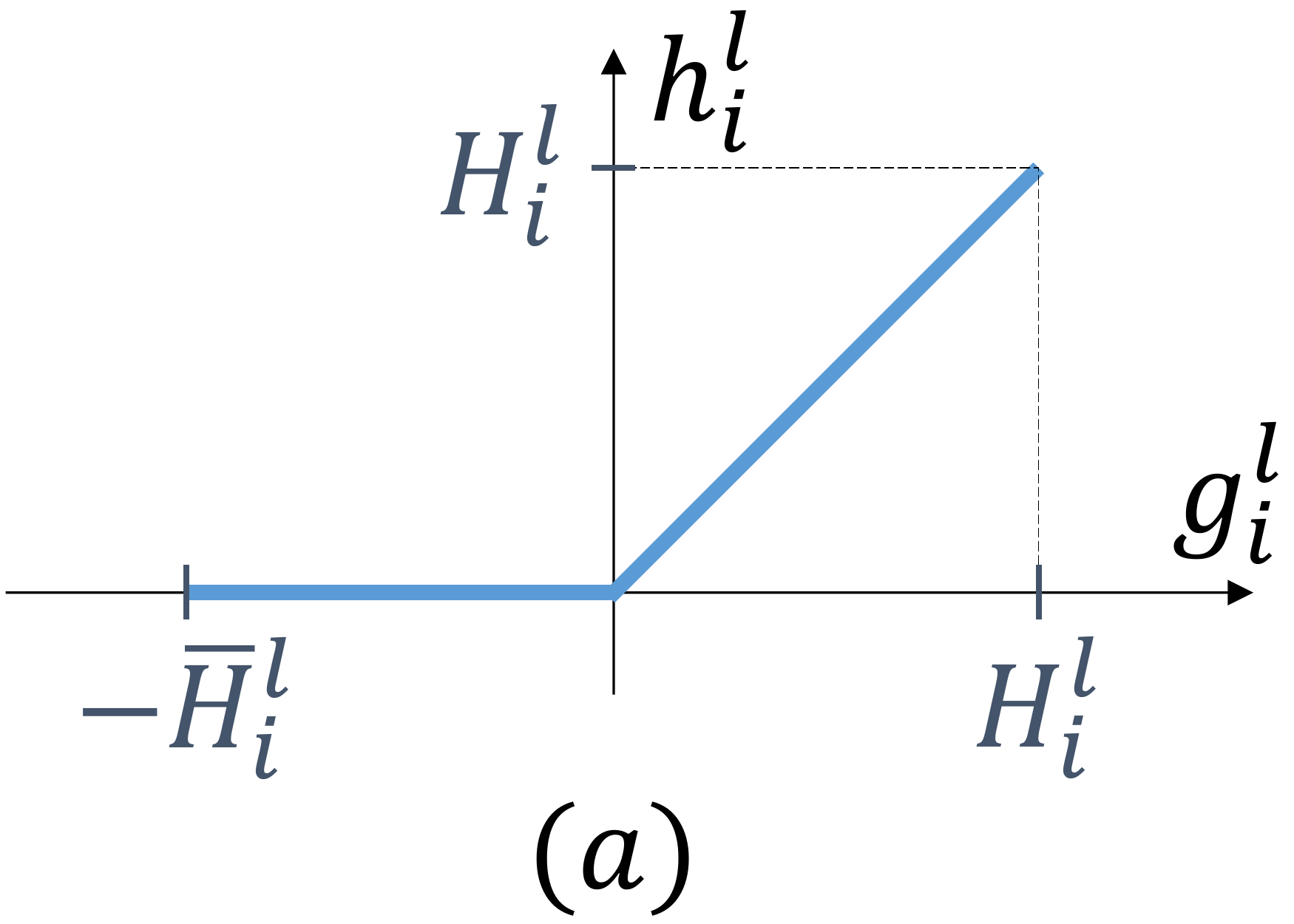}~
\includegraphics[scale=0.17]{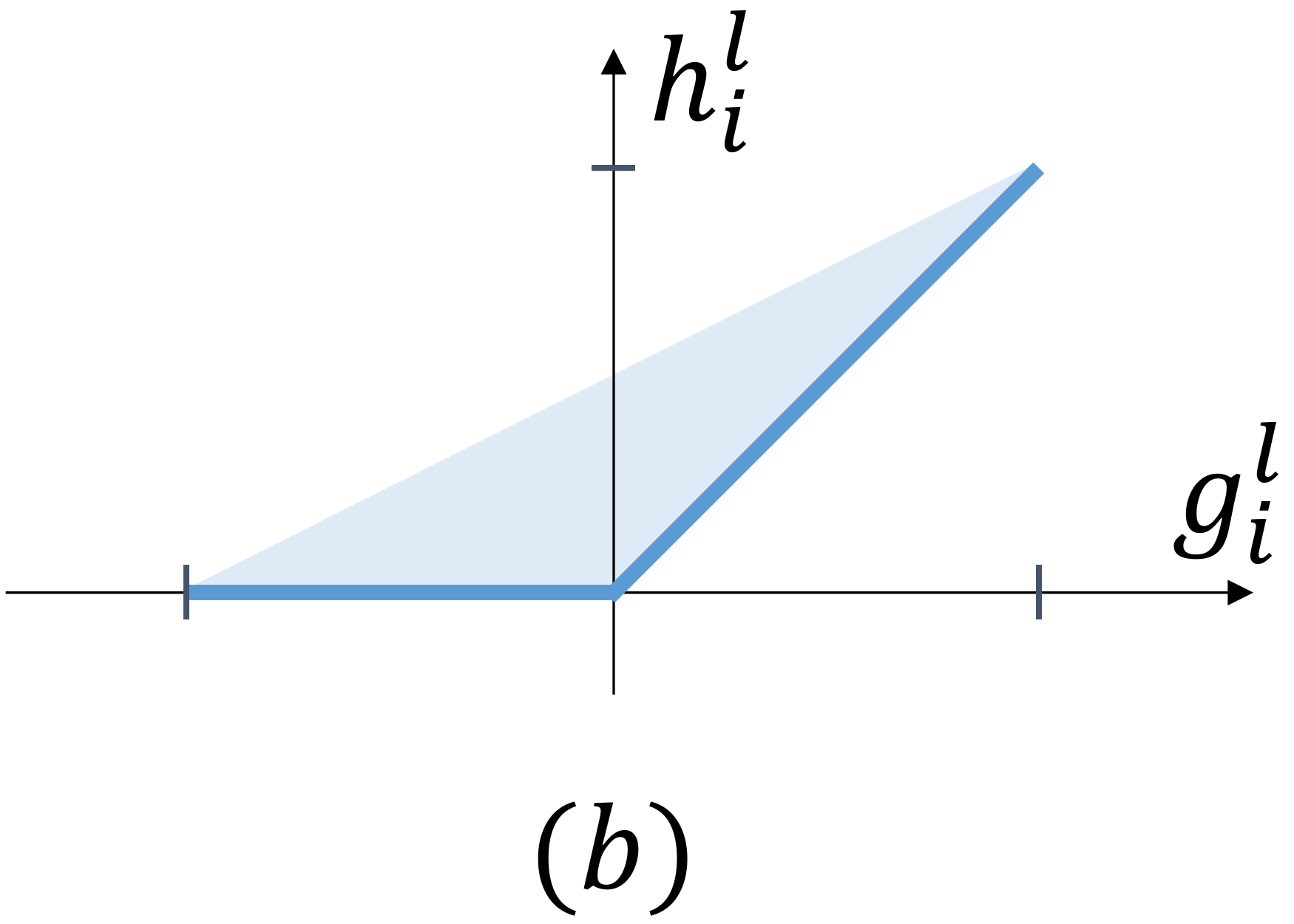}
\caption{(a) ReLU mapping $h_i^l = \max \{0, g_i^l\}$; 
and (b) Convex outer approximation on $(g_i^l, h_i^l)$.}
\label{fig:outer}
\end{figure}

For correctness, constants $H_i^l$ and $\bar{H}_i^l$ should be positive and as large as $\evh_i^l$ and $\bar{\evh}_i^l$ can be. 
In such case, 
the value of $\evg_i^l$ determines if the unit or its fictitious complement is active. 
%Note, however, that 
However, 
constraints (\ref{eq:mip_unit_begin})--(\ref{eq:mip_unit_end}) allow $\evz_i^l = 1$ when $\evg_i^l = 0$. 
To count the number of linear regions, 
%\citeauthor{BoundingCounting}~\citeyearpar{BoundingCounting} uses 
we 
%use the projection on the binary variables of the solutions where all active units have positive outputs, 
consider the set of binary variables in the solutions where all active units have positive outputs, 
i.e., $\evh_i^l > 0$ if $\evz_i^l = 1$~\cite{BoundingCounting}, 
thereby counting the positive solutions with respect to $f$ on the binary variables of %the following formulation:
\begin{align}
    \max ~ & f \\
    \text{s.t.} ~ & (\ref{eq:mip_unit_begin})-(\ref{eq:mip_unit_end}), f \leq \evh_i^l + (1 - \evz_i^l) H_i^l & l \in \sL; ~ i \in \sN_l \\
%    & f \leq \evh_i^l + (1 - \evz_i^l) H_i^l & l \in \sL; ~ i \in \sN_l \\
    & \vx \in \sX
\end{align}
The solutions on $\evz$ can be enumerated using the one-tree algorithm~\cite{OneTree}, 
in which the branch-and-bound tree used to obtain the optimal solution of the formulation above is further expanded to collect near-optimal solutions up to a given 
%number or value threshold.
limit. 
In general,
finding a feasible solution to a MILP is NP-complete~\cite{Cook} 
and thus optimization is NP-hard. 
However, 
%\citeauthor{FischettiMIP}~\citeyearpar{FischettiMIP} note that 
a feasible solution 
in our case 
can be obtained from any valid input~\cite{FischettiMIP}. 
While that does not directly imply that optimization over neural networks is easy, 
it hints at %the possibility of 
good properties 
to explore. 

MILP formulations %with an equivalent feasible set 
have been used 
%in the context of 
for 
network verification %to determine the image of the function modeled
~\cite{LomuscioMIP,DuttaMIP} and evaluation of adversarial perturbations% in the domain  $\sX$
~\cite{ChengMIP,FischettiMIP,TjengMIP,XiaoMIP,StrongTrained}. % Removed XiaoMIP 
%There are also similar applications relaxing 
Other applications relax 
the binary variables as continuous variables in %the domain 
$[0,1]$ or %using 
use 
the Linear Programming~(LP) formulation of a particular linear region~\cite{BastaniLP,EhlersLP,WongLP}, 
which %can be simply 
is 
defined using $\mW_i^l \vh^{l-1} + \evb_i^l \geq 0$ for active units and the complement for inactive units.
Although equivalent, 
%some authors have explored how 
these MILP formulations may differ in strength~\cite{FischettiMIP,TjengMIP,HuchetteMIP,StrongTrained}.  
When the binary variables are relaxed, % as continuous variables in the domain $[0,1]$, 
%we obtain a 
their 
linear relaxation may be different. %across formulations. 
We say that an MILP formulation $A$ is stronger than a formulation $B$ if, 
when projected on common sets of variables, 
the linear relaxation of $A$ is a %proper 
subset of the linear relaxation of $B$. 
Formulation strength is often %regarded 
used 
as a proxy for %MILP 
solver performance.

%{r}{0.60\textwidth}
%\begin{figure}[h]
%\centering
%\includegraphics[scale=0.19]{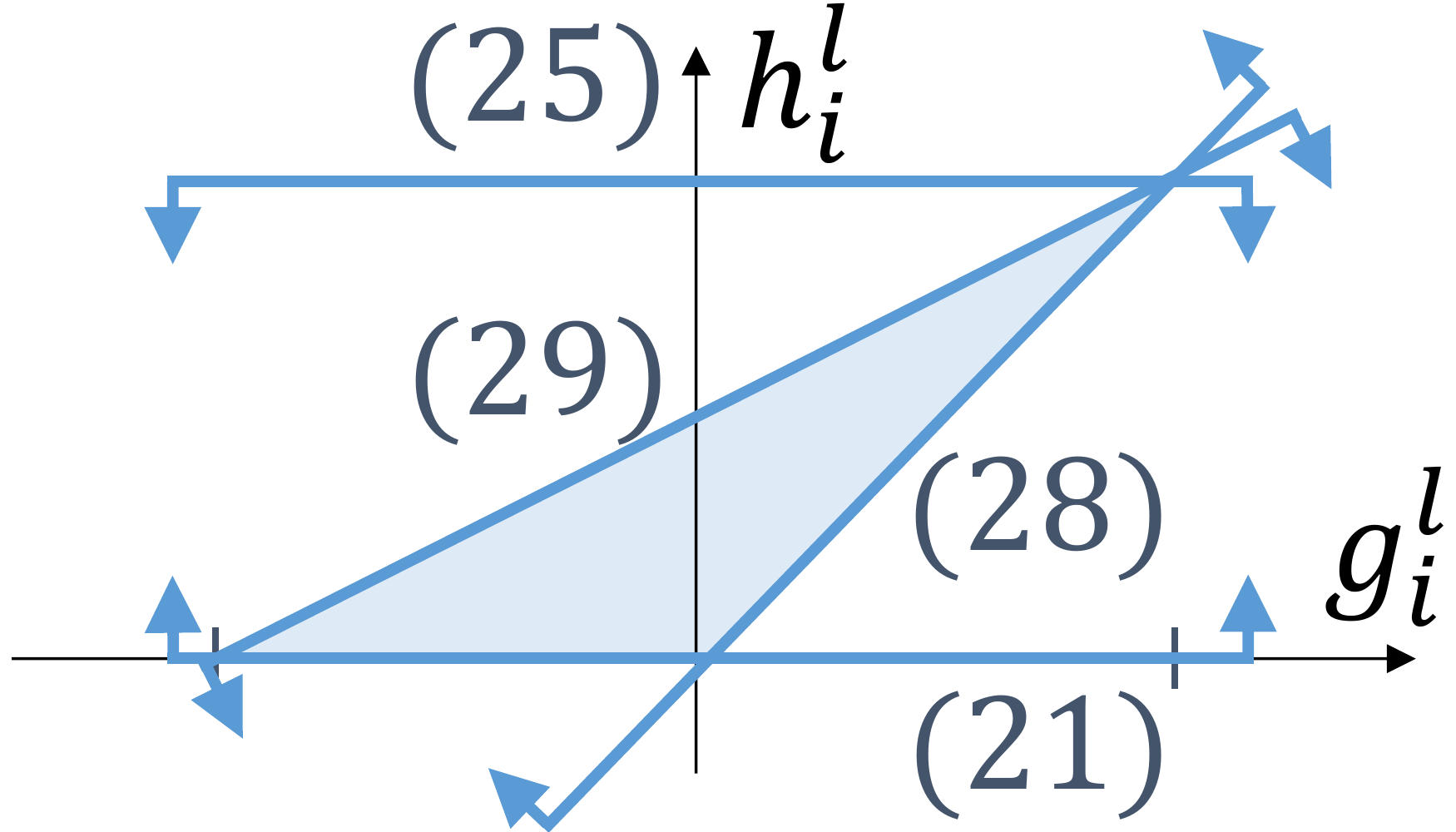}
%\caption{%Using the same representation as in Figure~\ref{fig:outer}, 
%%we illustrate how the 
%The projected inequalities (\ref{eq:h_pos}), (\ref{eq:h_ub}), %(\ref{eq:slope_lb}), and (\ref{eq:upper_diagonal}) define the convex outer %approximation on $(g_i^l, h_i^l)$.}\label{fig:projected}
%\end{figure}

%
%
%
%
Differences in strength may be due to 
smaller values for constants such as $H_i^l$ and $\bar{H}_i^l$,  
additional valid inequalities to remove fractional values of $\vz$, 
or an extended formulation with more variables. 
%For mapping DNNs, 
%we can discuss strength in at least three levels of scope.
%
%
%
%First, we 
Let us 
consider the strength of the formulation for each ReLU activation $h_i^l = \max\{0,g_i^l\}$. 
Ideally, we want the projection on $g_i^l$ and $h_i^l$ to be the convex outer approximation of all possible combined values of those variables~\cite{WongLP}, 
as illustrated in Figure~\ref{fig:outer}. %, % (a) and (b), 
%which is %in fact 
%the case if the values of $H_i^l$ and $\bar{H}_i^l$ are the smallest possible: 

\begin{lem}\label{lem:outer}
If $H_i^l = \arg \max_{g^{l-1}} \{ g_i^l \} \geq 0$ and $\bar{H}_i^l = \arg \max_{g^{l-1}} \{ - g_i^l \} \geq 0$, then the linear relaxation of (\ref{eq:mip_after_Wb_begin})--(\ref{eq:mip_unit_end}) defines the convex outer approximation on $(g_i^l, h_i^l)$.
\end{lem}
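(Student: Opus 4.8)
The plan is to project the linear relaxation of (\ref{eq:mip_after_Wb_begin})--(\ref{eq:mip_unit_end}) onto the pair $(g_i^l, h_i^l)$ by Fourier--Motzkin elimination of the auxiliary variables $\bar{\evh}_i^l$ and $\evz_i^l$, and then to verify that the resulting two-dimensional polyhedron coincides with the convex hull of the ReLU graph $\{(g, \max\{0,g\}) : g \in [-\bar{H}_i^l, H_i^l]\}$ shown in Figure~\ref{fig:outer}(b). Throughout I drop the unit/layer indices and write $(g,h,\bar h, z, H, \bar H)$. I first peel off the degenerate ``stable unit'' cases: if $H = 0$ then $h \le Hz$ forces $h = 0$, and if $\bar H = 0$ then $\bar h = 0$ forces $h = g \ge 0$; in either case both sides of the claim collapse to the same segment (or point) and the statement is immediate, so I may assume $H, \bar H > 0$. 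Since $\max\{0,\cdot\}$ is convex and piecewise linear with its single breakpoint at $0 \in [-\bar H, H]$, the convex hull of its graph over $[-\bar H, H]$ is the triangle $T$ with extreme points $(-\bar H, 0)$, $(0,0)$ and $(H,H)$, whose three facets lie on the lines $h = 0$, $h = g$ and $h = \tfrac{H}{H+\bar H}(g + \bar H)$; that is, $T = \{(g,h) : h \ge 0,\ h \ge g,\ (H+\bar H)h \le H(g+\bar H)\}$. By hypothesis $H_i^l$ and $\bar H_i^l$ are the exact extreme values of $g_i^l$, so $[-\bar H_i^l, H_i^l]$ is precisely the range spanned by $g_i^l$ and $T$ is indeed the convex outer approximation of the admissible $(g_i^l, h_i^l)$ pairs.

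Next I carry out the elimination. The equality $g = h - \bar h$ gives $\bar h = h - g$, so $\bar h \ge 0$ becomes $h \ge g$ and $\bar h \le \bar H(1-z)$ becomes $z \le 1 - \tfrac{h-g}{\bar H}$. Together with $h \le Hz$ (i.e.\ $z \ge \tfrac{h}{H}$) and $0 \le z \le 1$, a feasible value of $z$ exists iff $\max\{0, \tfrac{h}{H}\} \le \min\{1,\, 1 - \tfrac{h-g}{\bar H}\}$; using $h \ge 0$ and $H,\bar H > 0$, this is equivalent to the three scalar inequalities $h \le H$, $(H+\bar H)h \le H(g+\bar H)$, and $h - g \le \bar H$. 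Hence the projection of the relaxation onto $(g,h)$ is
\[
P = \bigl\{(g,h) : h \ge 0,\ h \ge g,\ h \le H,\ (H+\bar H)h \le H(g+\bar H),\ h - g \le \bar H\bigr\}.
\]

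It remains to show $P = T$. The inclusion $P \subseteq T$ is immediate, so the content is that the two extra inequalities of $P$ are redundant over $T$: combining $h \ge g$ with $(H+\bar H)h \le H(g+\bar H)$ yields $g \le H$, hence $h \le \tfrac{H(g+\bar H)}{H+\bar H} \le H$; and combining $h \ge 0$ with the same inequality yields $g \ge -\bar H$, hence $h - g \le \tfrac{H(g+\bar H)}{H+\bar H} - g = \tfrac{\bar H(H - g)}{H+\bar H} \le \bar H$. Thus $T \subseteq P$ and $P = T$, which is the claimed convex outer approximation. The only mildly delicate points are the Fourier--Motzkin bookkeeping, which legitimately produces the two redundant facets $h \le H$ and $h - g \le \bar H$ that must then be derived from the three genuine facets of $T$, and ensuring the divisions by $H$ and $\bar H$ are valid, which is why the degenerate stable-unit cases are handled separately at the outset; the rest is elementary.
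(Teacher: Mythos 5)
Your proposal is correct and follows essentially the same route as the paper's Appendix~\ref{app:outer}: Fourier--Motzkin elimination of $\evz_i^l$ and $\bar{\evh}_i^l$ to project the relaxation onto $(g_i^l,h_i^l)$, arriving at the same five inequalities (your $(H+\bar H)h \le H(g+\bar H)$ is the paper's $h-g \le \bar H(1-h/H)$). The only differences are cosmetic improvements on your part: you verify algebraically that the projection equals the triangle $\mathrm{conv}\{(-\bar H,0),(0,0),(H,H)\}$ by deriving the two redundant inequalities from its three facets, where the paper appeals to Figure~\ref{fig:projected}, and you handle the degenerate cases $H=0$ or $\bar H=0$ explicitly so the divisions are licit.
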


Lemma~\ref{lem:outer} %evidences that, for formulations like the one above, 
shows that 
%constants for both the maximum and the minimum values of $h_i^l$ 
the smallest possible values of $H_i^l$ and $\bar{H}_i^l$ 
are necessary to obtain a %strong 
stronger 
formulation. The proof can be found in Appendix~\ref{app:outer}. 
%\citeauthor{HuchetteMIP}~\citeyearpar{HuchetteMIP} makes a similar claim without proof. 
A similar claim without proof is found in~\cite{HuchetteMIP}. 

%\begin{figure*}[h]
%\centering
%\includegraphics[scale=0.2]{Fig_A.pdf}
%\qquad \qquad
%\includegraphics[scale=0.2]{Fig_B.pdf}
%%\qquad
%%\includegraphics[scale=0.2]{Fig_C.pdf}
%\caption{(a) ReLU mapping $h_i^l = \max \{0, g_i^l\}$; 
%and 
%(b) Convex outer approximation on $(g_i^l, h_i^l)$.} %; and 
%%(c) If P maps a vertex of the layer input $h^{l-1}$, the rest of segment PQ is infeasible and would be removed by convexifying the formulation of the layer.}
%\label{fig:outer}
%\end{figure*}

When %the domain $\sX$ of the network 
$\sX$ 
is defined by a box, 
%in which case 
and thus 
the domain of each input variable $x_i$ is an independent continuous interval, 
then the smallest possible values for $H_i^1$ and $\bar{H}_i^1$ can be computed with interval arithmetic 
by taking element-wise maxima~\cite{ChengMIP,BoundingCounting}. 
When extended to subsequent layers, %however, 
this approach may overestimate %the values for 
$H_i^l$ and $\bar{H}_i^l$ 
as the %output of subsequent layers is not necessarily a box in which the maximum value of each input is independent from the others. 
maximum value of the outputs are not necessarily independent. 
More generally, 
if $\sX$ is polyhedral, 
%\citeauthor{FischettiMIP}~\citeyearpar{FischettiMIP} and \citeauthor{TjengMIP}~\citeyearpar{TjengMIP} have shown that 
we can obtain the smallest values for these constants by solving a sequence of MILPs~\cite{FischettiMIP,TjengMIP}. % on layers $l' \in \sL$ for each unit $i \in \sN_{l'}$ %, 
%which has the form
%of the form 
%\begin{align}
%_i^{l'} = & ~\max ~ g_i^{l'} \label{eq:h_ub_mip_begin} \\
%&\text{~~s.t.~~~} (\ref{eq:mip_unit_begin})-(\ref{eq:mip_unit_end}) & l \in \{1, \ldots, l'-1\}; ~ i \in \sN_l \\
%& ~~~~~~~~~~~x \in \sX \label{eq:h_ub_mip_end}
%\end{align}
We can use $H_i^{l'} = \max ~  \{ g_i^{l'} : (\ref{eq:mip_unit_begin})-(\ref{eq:mip_unit_end}) ~\forall l \in \{1, \ldots, l'-1\}, ~ i \in \sN_l, x \in \sX \}$. 
If $H_i^{l'} \leq 0$, 
the unit is always inactive, denoted as \emph{stably inactive},  and we can remove such units from the formulation. 
Similarly, we can use 
$\bar{H}_i^{l'} = - \min ~  \{ g_i^{l'} : (\ref{eq:mip_unit_begin})-(\ref{eq:mip_unit_end}) ~\forall l \in \{1, \ldots, l'-1\}, ~ i \in \sN_l, x \in \sX \}$. 
If $\bar{H}_i^{l'} < 0$, 
the unit is always active, denoted as \emph{stably active}, and we can simply replace constraints (\ref{eq:mip_unit_begin})-(\ref{eq:mip_unit_end}) with $h_i^l = g_i^l$. 
%and 
%\begin{align}
%\bar{H}_i^{l'} = & ~\max ~ -g_i^{l'} \label{eq:hbar_ub_mip_begin} \\
%&\text{~~s.t.~~~} (\ref{eq:mip_unit_begin})-(\ref{eq:mip_unit_end}) & l \in \{1, \ldots, %l'-1\}; ~ i \in \sN_l \\
%& ~~~~~~~~~~~x \in \sX \label{eq:hbar_ub_mip_end}
%\end{align}
%and then replacing (\ref{eq:h_ub_mip_begin}) with $\bar{H}_i^{l'} = \max -g_i^{l'}$~\cite{FischettiMIP,TjengMIP}. % to compute $\bar{H}_i^{l'}$. 
%
In certain large networks, 
%\citeauthor{TjengMIP}~\citeyearpar{TjengMIP} found that 
many units are stable~\cite{TjengMIP}. %. always active because $\bar{H}_i^l<0$ or 
%always inactive because $H_i^l \leq 0$~\cite{TjengMIP}. 
%At the very least, we can use a small positive value on either case. % when solving (\ref{eq:h_ub_mip_begin})--(\ref{eq:h_ub_mip_end}) and (\ref{eq:hbar_ub_mip_begin})--(\ref{eq:hbar_ub_mip_end}).
%In the former case, which is denoted as \emph{stably active}, we can simply replace constraints (\ref{eq:mip_unit_begin})-(\ref{eq:mip_unit_end}) with $h_i^l = g_i^l$. In the latter case, which is denoted as \emph{stably inactive}, we note that the unit can be removed from the formulation. %without any loss. 
%They denote as \emph{unstable} the remaining units, 
%%which can be active or not depending on their inputs. 
%where activity depends on the input.
The remaining units, where activity depends on the input, are denoted \emph{unstable}.

We propose some valid inequalities involving consecutive layers of the network.  
%The first is inspired by how constants $H_i^l$ and $\bar{H}_i^l$ can be bounded using interval arithmetic. Depending on which units are active in the previous layer, 
%the output of a given unit may be further restricted. 
%For each layer $l \in \{2, \ldots, L\}$ and unit $i \in \sN_l$, we have that 
%\begin{equation}
%    h_i^l \leq \max \left\{ 0, b_i^l \right\} + \sum_{j \in \sN_{l-1} : \mW_{i j}^l > 0} \mW_{i j}^l H_j^{l-1} z_j^{l-1} %\qquad 
    %l = 2, \ldots, L; i = 1, \ldots, n_l
%\end{equation}
%because the maximum value of $h_i^l$ depends on the maximum value of the units that are active in layer %$l-1$. 
%The $\max$ term is necessary if $b_i^l$ is negative, 
%since the inactivity of all the units on the summation term merely implies that the unit itself is inactive instead of rendering the system infeasible. 
%
%Following the same logic, 
%we may actually 
%In fact, we can 
%define inequalities on the binary variables alone, 
%which is preferable since large constants create numerical difficulties and deteriorate solver performance. 
For unit $i$ to be active when $b_i^l \leq 0$, 
there must be a positive contribution, % from the previous layer, 
and thus some unit $j$ in layer $l-1$ such that $\mW_{i j}^l > 0$ 
%should be 
is 
also active. 
Hence, for each layer $l \in \{2, \ldots, L\}$ and unit $i \in \sN_l$ such that $b_i^l \leq 0$, 
\begin{equation}
    z_i^l \leq \sum_{j \in \{1, \ldots, n_{l-1}\} : \mW_{i j}^l > 0} z_j^{l-1} . %\qquad 
    %l = 2, \ldots, L; i = 1, \ldots, n_l : b_i^l \leq 0 
    \label{eq:to_active}
\end{equation}
%
%
%\textcolor{blue}{When $b^l_i$ is not positive and $z^l_i=1$, then we may also be able to say this right? 
%$-b^l_i z^l_i \le \sum_{j \in \{1, \ldots, n_{l-1}\} : W_{i j}^l > 0} W_{i j}^l H_j^{l-1} z_j^{l-1} %\qquad 
%    l = 2, \ldots, L; i = 1, \ldots, n_l : b_i^l \leq 0$}.
%
Similarly, unit $i$ is only inactive when $b_i^l > 0$ if 
some unit $j$ in layer $l-1$ such that $\mW_{i j}^l < 0$ is active. 
Likewise, for each layer $l \in \{2, \ldots, L\}$ and unit $i \in \sN_l$ such that $b_i^l > 0$,

\begin{equation}
    (1-z_i^l) \leq \sum_{j \in \{1, \ldots, n_{l-1}\} : \mW_{i j}^l < 0} z_j^{l-1} . %\qquad 
    %l = 2, \ldots, L; i = 1, \ldots, n_l : b_i^l > 0 
    \label{eq:to_inactive}
\end{equation}

%\textcolor{blue}{When $b^l_i$ is positive and $z^l_i=0$, then we may also be able to say this right? 
%$-b^l_i (1-z^l_i) \le \sum_{j \in \{1, \ldots, n_{l-1}\} : W_{i j}^l < 0} W_{i j}^l H_j^{l-1} z_j^{l-1} %\qquad 
%    l = 2, \ldots, L; i = 1, \ldots, n_l : b_i^l > 0$}.
%
%
Let us denote unstable units in which $b_i^l \leq 0$, and thus (\ref{eq:to_active}) applies, as %\emph{inactive leaning} \textcolor{blue}{active leaning?}; 
\emph{inactive leaning}; 
and those in which $b_i^l > 0$, and thus (\ref{eq:to_inactive}) applies, as 
%\emph{active leaning} \textcolor{blue}{inactive leaning?}. 
\emph{active leaning}. %{\color{blue} Sorry, we had a discussion about these active leaning and inactive learning units. Should we change it to stably active, and stably inactive ones?}
Within linear regions where none among the units of the previous layer in the corresponding inequalities is active, 
these units can be regarded as stably inactive and stably active, respectively. 
We will use the same ideas to obtain better bounds in Section~\ref{sec:ub}.
%~\ref{sec:ub}.
%\textcolor{blue}{The terminology "stably active" and "active leaning" make sense, but it can be improved. Can we use "globally active" and "conditionally active"? We can also refer to them as "fixed" units and "free" units.}

\section{Approximate Lower Bound}\label{sec:lb}

%Since we are mainly interested in the order of magnitude, 
%we focus on extending the classic MBound algorithm~\cite{MBound}. 
%We opt for a fixed -- and also small -- size $k$ 
%to avoid scalability issues as the number of ReLUs increase.

%\subsection{The MILP Case}

%Algorithm~\ref{alg:mip_bound} computes probabilistic lower bounds for the number of the solutions on binary variables of an MILP formulation. 
%Prior work on approximating MILP solution sets has been restricted to an upper bound~\cite{UpperMip}. 
%
%
%

We can use approximate model counting to estimate the size of the set of binary vectors corresponding to the activation patterns of a neural network. Essentially, we restrict the set of solutions by iteratively adding constraints with good sampling properties, such as XOR, until the problem becomes infeasible. Based on how many constraints it takes to make the formulation infeasible across many runs, we obtain bounds on the number of solutions with a certain probability. 
We describe in Section~\ref{sec:sat_lb} 
how this type of approach has been extensively studied in the SAT literature. 

The same ideas have not yet been extended to MILP formulations, where we exploit the specificity of MILP solvers to devise a more efficient algorithm. 
%The key difference when devising a counting approximation for MILP formulations is that MILP solvers are not used in the same way as SAT solvers. 
The assumption in SAT-based approaches is that each restricted formula entails a new call to the solver. 
Hence, obtaining a data point for each number of restrictions takes a linear number of calls. 
%\citeauthor{LogSAT}~\citeyearpar{LogSAT} have 
That has been 
improved to a logarithmic number of calls 
by orderly applying the same sequence of constraints up to each number of $r$ of XOR constraints, 
%which thus allowed the authors to 
with which one can 
apply binary search to find the smallest $r$ that makes the formula unsatisfiable~\cite{LogSAT}.  
In MILP solvers, we can test for all values of $r$ with a single call to the solver by generating parity constraints as lazy cuts, which can be implemented through callbacks. When a new solution is found, a callback is invoked to generate parity constraints. Each constraint may or may not remove the solution just found, since we preserve the independence between the solutions found and the constraints generated, and thus we may need to generate multiple parity constraints before yielding the control back to the solver. 
Algorithm~\ref{alg:mip_bound} 
does that based on MBound~\cite{MBound}. %computes probabilistic lower bounds in that way. 

\begin{algorithm}[h!]
\caption{\texttt{MIPBound} computes the probability of some lower bounds 
on the  solutions of a formulation $F$ with $n$ binary variables  
by adding parity constraints of size $k$}
\label{alg:mip_bound}
%{%\AlgorithmFontSize
\footnotesize
\begin{algorithmic}[1]
%%%\Function{MIPBound}{$F,n,k$}
\State $i \gets 0$ 
\For{$j \gets 0 \to n$}
\State $f[j] \gets 0$
\EndFor
\While{Termination criterion not satisfied}
\State $F' \gets F$ %\Comment{Start over with $F'$ as formulation $F$}
\State $i \gets i+1$ %\Comment{Number of times that we have made $F'$ infeasible}
\State $r \gets 0$ %\Comment{Number of parity constraints added this time}
\While{$F'$ has some solution $s$}
\Repeat 
\State Generate parity constraint $C$ with $k$ of the variables %of size $k$ among $n$ variables
\State $F' \gets F' \cap C$
\State $r \gets r + 1$
\Until{$C$ removes $s$} %\Comment{This loop is implemented as a lazy cut callback}
\EndWhile
\For{$j \gets 0 \to r-1$}
\State $f[j] \gets f[j]+1$ %\Comment{Number of times that $F'$ is feasible after adding $j$ constraints}
\EndFor
\EndWhile
\For{$j \gets 0 \to n-1$} %\Comment{Computes probabilities after last call to the solver}
%                double delta_exp = (((double) feasibility_after_cuts[j+1])/(i)) - 0.5;
%                double beta = 2.0*(0.5+fabs(delta_exp))-1.0;
%                double p = pow(exp(beta)/pow(1.0+beta,1.0+beta),(i)/2.0);
\State $\delta \gets f[j+1]/i - 1/2$
\If{$\delta > 0$} %\Comment{Formulation after $j+1$ constraints is more often feasible}
%\State $\beta \gets 2*(1/2+\delta)-1$
%\State $p = \left(\frac{e^\beta}{(1+\beta)^{1+\beta}}\right)^{i/2}$
\State $P_j \gets 1 - \left(\frac{e^{2.\delta}}{(1+2.\delta)^{1+2.\delta}}\right)^{i/2}$ 
%\Comment{Probability that $|S| > 2^j$}
\Else %\Comment{If formulation is more often infeasible, then no probability is defined}
\State \textbf{break} %\Comment{Same is true for subsequent values of $j$, so exit loop}
\EndIf
\EndFor
\State \textbf{return} Probabilities $P$
%%%\EndFunction
\end{algorithmic}
%}
\end{algorithm}

We denote Algorithm~\ref{alg:mip_bound} as \texttt{MIPBound}. For each repetition of the outer \texttt{while} loop, 
parity constraints are added to a copy $F'$ of the formulation until it becomes infeasible. 
Appendix~\ref{ap:xor} discusses how to represent parity constraints in MILP using unit hypercube cuts from~\cite{CanonicalCuts}.
The inner \texttt{while} loop corresponds to the solver call 
and the block between \texttt{repeat} and \texttt{until} is implemented as a lazy cut callback, which is invoked when an incumbent solution $s$ is found. 
After each solver call, 
the number of constraints $r$ to make $F'$ infeasible is used to increase $f[j]$ for all $j < r$, 
which counts the number of times that $F'$ remained feasible after adding $j$ constraints. 
If $F$ often remained feasible 
%more often than not 
with $j$ constraints, 
we compute the probability $P_{j-1}$ that $|S| > 2^{j-1}$, which is explained in 
Appendix~\ref{ap:probs}.
%
%If it remained feasible more often than not, 
%we compute the probability $P_{j-1}$ that $|S| > 2^{j-1}$ 
%based on the main result in~\cite{MBound}.  
%For XOR size $k$; $r$ XOR constraints; loop repetitions $i$; number of repetitions that remain feasible after $j$ restrictions $f[j]$; 
%deviation $\delta \in (0, 1/2]$; and precision slack $\alpha \geq 1$, 
%a strict lower bound of $2^{r-\alpha}$ holds 
%if 
%\begin{equation}\label{eq:lb_condition}
%    f[j] \geq i.(1/2+\delta),
%\end{equation}
%and for $\delta \in (0,1/2)$ 
%it holds with probability $1 - \left(\frac{e^{\beta}}{(1+\beta)^{1+\beta}}\right)^{i/2^\alpha}$ 
%for $\beta = 2^\alpha.(1/2+\delta)-1$. 
%We choose $\alpha = 1$, 
%hence making $\beta = 2.\delta$, 
%and then $\delta$ is the largest value satisfying condition~(\ref{eq:lb_condition}).

%An upper bound on the number of solutions of binary integer linear programs is found in~\cite{UpperMip}. %proposed by \citeauthor{UpperMIP}~\citeyearpar{UpperMIP}. 

\section{Analytical Upper Bound}\label{sec:ub}

%We prove a tighter -- and empirical -- upper bound in this section. This bound is obtained by taking into accounnt which units are stably active and stably inactive on the input domain $\sX$ 
%and also how many among the unstable units are locally stable in some of the linear regions. 
%
%
%
%First,
%that some units can be proven to stably active or stably inactive on the domain $X$, 
%as observed by \cite{TjengMIP}. 
%Second, 
%that the remaining unstable units may only be active or inactive according to the units that are active in the previous layer, 
%as evidenced by inequalities (\ref{eq:to_active}) and (\ref{eq:to_inactive}). 
%Using such information, 
%we can tighten the upper bound on the number of linear regions for a particular rectifier network. 

%We discuss other factors   that have been found to affect such bounds in Section~\ref{sec:ub_prior}, 
%the proposed bound in Section~\ref{sec:ub_now}, and how to compute its parameters in Section~\ref{sec:bounding_units}. 

%\subsection{Factors affecting the bound on a configuration of layers}\label{sec:ub_prior}
%\subsection{Prior Analytical Insights}\label{sec:ub_prior}

%We explain their use in prior work in this Section.

%The two main building blocks 
In order to bound the number of linear regions, 
%are 
we use 
activation hyperplanes and the theory of hyperplane arrangements. 
For each unit, % $i$ in layer $l$, 
the activation hyperplane $\mW_i^l \vh^{l-1} + b_i^l = 0$ splits the input space $\vh^{l-1}$ into the regions where 
the unit is active ($\mW_i^l \vh^{l-1} + b_i^l > 0$) or inactive ($\mW_i^l \vh^{l-1} + b_i^l \leq 0$). 
%To bound the number of regions defined by multiple hyperplanes on the same space, 
%we use a result from~\cite{Zaslavsky} %\citeauthor{Zaslavsky}~\citeyearpar{Zaslavsky} 
%that $n_l$ hyperplanes in an $n_{l-1}$-dimensional space 
%define at most $\sum_{j=0}^{n_{l-1}} \binom{n_l}{j}$ regions. 
The number of full-dimensional regions defined by the arrangement of $n_l$ hyperplanes in an $n_{l-1}$-dimensional space is at most $\sum_{j=0}^{n_{l-1}} \binom{n_l}{j}$~\cite{Zaslavsky}.
%However, 
%if 
%If 
%the normal vectors of these hyperplanes span a smaller space, 
%then the same number of regions can be defined in less dimensions, such as %. 
%In particular, 
%\citeauthor{BoundingCounting}~\citeyearpar{BoundingCounting} shows that we can actually 
%Hence, we can 
%assume a maximum of 
%$\sum_{j=0}^{\text{rank}(\mW^l)} \binom{n_l}{j} \leq \sum_{j=0}^{\min\{n_{l-1},n_l\}} \binom{n_l}{j}$ 
%in~\cite{BoundingCounting}.
%regions instead. 
See Appendix~\ref{app:bounds} for related bounds~\cite{Raghu,Montufar17,BoundingCounting}.

%\subsection{Activation-based Bound}\label{sec:ub_now}

%Now we show that we can further improve on the sequence of bounds previously found in the literature  
%%above 
%by leveraging the local and global stability of units of a trained network. %, 
%%which can be particularly useful to compare networks having the same configuration of layers.

We can improve on these previous bounds by leveraging that some units of the trained network are stable for some or all possible inputs.
First, 
note that only units in layer $l$ that can be active in a given linear region produced by layers $1$ to $l-1$ affect the dimension of the space in which the linear region can be further partitioned by layers $l+1$ to $L$. 
Second, 
only the subset of these units 
that can also be inactive within that region, i.e., the unstable ones, 
counts toward the number of hyperplanes partitioning the linear region at layer $l$. 
Every linear region is contained in the same side of the hyperplane defined by each  stable unit. 
Hence, 
let $\mathcal{A}_l(k)$ be the maximum number of units that can be active in layer $l$ if $k$ units are active in layer $l-1$;  
and $\mathcal{I}_l(k)$ be the corresponding maximum number of units that are unstable.  
%hence potentially defining hyperplanes that intersect the interior of the linear region. 
%Note that 
%We state our main result below and discuss how to compute $\mathcal{A}_l(k)$ and $\mathcal{I}_l(k)$ using $\mW^l$ and $\vb^l$ next.

\begin{thm}\label{thm:upper_bound}
Consider a deep rectifier network with $L$ layers with input dimension $n_0$ and  
at most $\mathcal{A}_l(k)$ active units and $\mathcal{I}_{l}(k)$ unstable units in layer $l$ for
every linear region defined by layers $1$ to $l-1$ when $k$ units are active in layer $l-1$. 
Then the maximum number of linear regions is at most
\begin{align*}
\sum_{(j_1,\ldots,j_L) \in J} \prod_{l=1}^L \binom{\mathcal{I}_l(k_{l-1})}{j_l}
\end{align*}
where $J = \{(j_1, \ldots, j_L) \in \mathbb{Z}^L: 0 \leq j_l \leq \zeta_l$, $\zeta_l = \min\{n_0, k_1, \ldots, k_{l-1}, \mathcal{I}_l(k_{l-1})\}\ \}$ 
with $k_0 = n_0$ and 
$k_l = \mathcal{A}_{l}(k_{l-1}) - j_{l-1}$ for $l \in \sL$. 
\end{thm}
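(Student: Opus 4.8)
The plan is to prove the bound by induction on the number of layers, carrying along for every linear region $S$ of the first $l$ layers an upper bound $d_l(S)$ on the rank of the affine map that the first $l$ layers realize on $S$, and simultaneously accounting for the regions by integer tuples $(j_1,\ldots,j_l)$ through which the recursive quantities $k_l$ and $\zeta_l$ are defined. Two elementary facts about a fixed region $S$ produced by layers $1$ through $l-1$ supply the geometric content. First, restricted to $S$ each ReLU is either the identity or zero, so the output of layer $i$ restricted to $S$ is an affine image --- supported on at most $k_i(S)$ coordinates, namely the active ones --- of the output of layer $i-1$; composing ranks gives $d_{l-1}(S)\le\min\{n_0,k_1(S),\ldots,k_{l-1}(S)\}$. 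Second, as noted just before the theorem, every $S$ lies entirely on one side of the activation hyperplane $\mW_i^l\vh^{l-1}+b_i^l=0$ of each \emph{stable} unit of layer $l$, so only the at most $\mathcal{I}_l(k_{l-1}(S))$ unstable units of layer $l$ actually subdivide $S$; pulled back through the rank-$d_{l-1}(S)$ affine map, these behave as an arrangement inside a space of dimension at most $\zeta_l=\min\{d_{l-1}(S),\mathcal{I}_l(k_{l-1}(S))\}=\min\{n_0,k_1,\ldots,k_{l-1},\mathcal{I}_l(k_{l-1})\}$, so Zaslavsky's bound already caps the number of sub-regions of $S$ by $\sum_{j_l=0}^{\zeta_l}\binom{\mathcal{I}_l(k_{l-1})}{j_l}$ (degeneracies from the rank-deficient pull-back, and from dropping stably inactive units, only lower the count).

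The inductive step itself rests on the refinement of Zaslavsky's region count that underlies the bound of \cite{BoundingCounting}: in an arrangement of $m$ hyperplanes confined to a $d$-dimensional flat, with each hyperplane oriented so that its positive halfspace is ``unit active,'' the regions can be injectively labeled by affinely independent --- hence size-$\le d$ --- subsets of the hyperplanes in such a way that a region carrying a label of size $j$ lies on the inactive side of at least $j$ of the hyperplanes. This is the standard sweep/shelling argument for Zaslavsky's formula with the generic direction chosen to point toward the all-active orthant. Applied at layer $l$ to a sub-region $S'$ of $S$ whose label has size $j_l\in\{0,\ldots,\zeta_l\}$, it gives two things: there are at most $\binom{\mathcal{I}_l(k_{l-1})}{j_l}$ such sub-regions inside $S$; and at least $j_l$ of the units counted by $\mathcal{I}_l$ are inactive on $S'$, so, since every unit active on $S'$ is among the at most $\mathcal{A}_l(k_{l-1})$ units that can be active on $S$, at most $\mathcal{A}_l(k_{l-1})-j_l=k_l$ units of layer $l$ are active on $S'$. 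Feeding $k_l$ into the rank bound of the previous paragraph propagates the invariant, $d_l(S')\le\min\{d_{l-1}(S),k_l\}=\min\{n_0,k_1,\ldots,k_{l-1},k_l\}$, which is exactly what makes $j_{l+1}\le\zeta_{l+1}$ in the following step. Since the number of regions of the first $l$ layers is the sum over regions $S$ of the first $l-1$ layers of the number of sub-regions of $S$, unrolling the recursion from the base case $l=0$ (the single region $\sX$ with $d_0=n_0=k_0$) and grouping by the realized tuple $(j_1,\ldots,j_L)$ yields exactly $\sum_{(j_1,\ldots,j_L)\in J}\prod_{l=1}^L\binom{\mathcal{I}_l(k_{l-1})}{j_l}$.

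I expect the main obstacle to be the refined arrangement lemma of the second paragraph: one must choose the generic sweep direction so that it simultaneously (i) labels each region by an affinely independent, hence size-$\le d$, set of facet hyperplanes, and (ii) forces those labeling hyperplanes to have the region on the side where the corresponding units are inactive, so that the counting factor $\binom{\mathcal{I}_l(k_{l-1})}{j_l}$ and the active-unit bound $\mathcal{A}_l(k_{l-1})-j_l$ are governed by the \emph{same} index $j_l$. Secondary technical points, all routine once this is in place, are: checking that a rank-deficient affine pull-back of the layer-$l$ hyperplanes (producing parallel, coincident, or vacuous images) can only decrease the region count; verifying that stably inactive units of layer $l$ may be discarded because they contribute neither a splitting hyperplane nor an active coordinate; and using monotonicity of $\mathcal{A}_l(\cdot)$ and $\mathcal{I}_l(\cdot)$ so that the hypothesis's per-region bounds still apply when only $k_{l-1}(S)\le k_{l-1}$ is known. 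After that, the bookkeeping of $k_l$, $\zeta_l$, and the index set $J$ is mechanical.
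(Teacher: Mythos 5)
Your proposal follows essentially the same route as the paper's proof: a layer-by-layer recursion that tracks the effective input dimension $\min\{n_0,k_1,\ldots,k_{l-1}\}$, applies Zaslavsky's bound only to the $\mathcal{I}_l(k_{l-1})$ unstable hyperplanes, pairs the per-layer factor $\binom{\mathcal{I}_l(k_{l-1})}{j_l}$ with the active-unit budget $k_l=\mathcal{A}_l(k_{l-1})-j_l$ passed to the next layer, and unrolls the recursion into the sum over $J$. The only divergence is local --- where the paper justifies that pairing by injecting regions into their sets of active unstable units and invoking $\binom{\mathcal{I}_l(k)}{\mathcal{I}_l(k)-j}=\binom{\mathcal{I}_l(k)}{j}$, you appeal to a sweep/shelling labeling lemma --- so this is a substitution inside one step rather than a genuinely different argument.
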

\begin{proof}
%In resemblance to \citeauthor{BoundingCounting}~\citeyearpar{BoundingCounting}, we 
We define %a recurrence to recursively 
a recursive recurrence to 
bound the number of subregions within a region. 
Let $R(l,k,d)$ %be an  
%upper bound to 
%the maximal 
bound the maximum 
number of regions attainable from partitioning a region with dimension at most $d$ among those defined by layers $1$ to $l-1$ in which at most $k$ units are active in layer $l-1$. 
%by using the remaining layers $l$ to $L$. 
For the base case $l=L$, 
we have 
$R(L,k,d) = \sum_{j=0}^{\min\{\mathcal{I}_L(k),d\}} \binom{\mathcal{I}_L(k)}{j}$ since $\mathcal{I}_l(k) \leq \mathcal{A}_l(k)$. 
The recurrence groups regions with same number of active units in layer $l$ as  
$R(l,k,d) = \sum_{j=0}^{\mathcal{A}_l(k)} N^l_{\mathcal{I}_l(k), d, j} R(l+1, j, \min\{j, d\})$ for $l = 1$ to $L-1$, 
where $N^l_{p, d, j}$ is the maximum number of regions with $j$ active units in layer $l$  
from partitioning a space of dimension $d$ using $p$ hyperplanes. 

%We also use the observation in \cite{BoundingCounting} 
Note that 
there are at most $\binom{\mathcal{I}_l(k)}{j}$ regions defined by layer $l$ when $j$ unstable units are active and there are $k$ active units in layer $l-1$, which can be regarded as 
the subsets of $\mathcal{I}_l(k)$ units of size $j$. 
Since layer $l$ defines at most 
$\sum_{j=0}^{\min\{\mathcal{I}_l(k), d\}} \binom{\mathcal{I}_l(k)}{j}$ 
regions with an input dimension $d$ and $k$ active units in the layer above, 
by assuming the largest number of active hyperplanes among the unstable units  %as in \citeauthor{BoundingCounting}~\citeyearpar{BoundingCounting} 
and also using $\binom{\mathcal{I}_l(k)}{\mathcal{I}_l(k) - j} = \binom{\mathcal{I}_l(k)}{j}$, 
then we define $R(l, k, d)$ 
for $1 \leq l \leq L - 1$ 
as the following expression: 
%\begin{align*}
$
    \displaystyle\sum_{j = 0}^{\min\{\mathcal{I}_l(k), d\}} \binom{\mathcal{I}_l(k)}{j} R(l + 1, \mathcal{A}_l(k) - j, \min\{\mathcal{A}_l(k) - j, d\}).
$
%\end{align*}
%if $1 \leq l \leq L - 1$ and 
%\begin{align*}
%    \sum_{j = 0}^{\min\{\mathcal{I}_L(k), d\}} \binom{\mathcal{I}_L(k)}{j}
%\end{align*}
%if $l = L$.

%\begin{align*}
%R(l, k, d) = 
%\begin{cases}
%\displaystyle\sum_{j = 0}^{\min\{\mathcal{I}_l(k), d\}} \binom{\mathcal{I}_l(k)}{j} R(l + 1, \mathcal{A}_l(k) - j, \min\{\mathcal{A}_l(k) - j, d\}) %\\ \hspace{26.5em}
%& \text{if $1 \leq l \leq L - 1$,}\\
%\displaystyle\sum_{j = 0}^{\min\{\mathcal{I}_L(k), d\}} \binom{\mathcal{I}_L(k)}{j} 
%\hspace{21.2em}
%&\text{if $l = L$.}
%\end{cases}
%\end{align*}
Without loss of generality, we assume that the input is generated by $n_0$ active units feeding the network, hence implying that the bound can be evaluated as $R(1,n_0,n_0)$:
%\begin{align*}
$
\displaystyle\sum_{j_1=0}^{\min\{\mathcal{I}_1(k_0), d_1\}} \binom{\mathcal{I}_l(k_0)}{j_1} %\sum_{j_2=0}^{\min\{\mathcal{I}_2(k_1), d_2\}} \binom{\mathcal{I}_2(k_1)}{j_2} 
\cdots 
\sum_{j_L=0}^{\min\{\mathcal{I}_L(k_{L-1}), d_L\}} \binom{\mathcal{I}_L(k_{L-1})}{j_L},
$
%\end{align*}
where 
$k_0 = n_0$ and 
$k_l = \mathcal{A}_{l}(k_{l-1}) - j_{l-1}$ for $l \in \sL$, 
whereas 
$d_l = \min\{n_0, k_1, \ldots, k_{l-1}\}$. 
We obtain the final expression by nesting the values of $j_1, \ldots, j_L$.
\end{proof}

Theorem~\ref{thm:upper_bound} improves the result  %by~\citeauthor{BoundingCounting}~\citeyearpar{BoundingCounting} 
in~\cite{BoundingCounting} 
when not all hyperplanes partition every %linear 
region from previous layers ($\mathcal{I}_l(k_{l-1}) < n_l$) 
or not all units can be active (smaller intervals for $j_l$ due to $\mathcal{A}_l(k_{l-1})$).

%\subsection{Bounding Active and Unstable Units} % across activation sets}
%\label{sec:bounding_units}

%Finally, 
%we discuss how the parameters introduced with the empirical bound in Section~\ref{sec:ub_now} 
Now we discuss how the parameters that we introduced in this section 
can be computed exactly, or else approximated. We first bound the value of $\mathcal{I}_l(k)$. 
Let $U^-_l$ and $U^+_l$ denote the sets of inactive leaning and active leaning units in layer $l$, and $U_l = U^+_l \cup U^-_l$. 
For a given unit $i \in U^-_l$, 
we can define a set $J^-(l,i)$ of units from layer $l-1$ 
that, if active, 
can potentially make $i$ active. 
In fact, 
we can use the set in the summation of inequality~(\ref{eq:to_active}), 
and therefore let 
$J^-(l,i) := \{ j : 1 \leq j \leq n_{l-1}, \mW_{i j}^l > 0 \}$.  
For a given unit $i \in U^+_l$, 
we can similarly use the set in inequality~(\ref{eq:to_inactive}), 
and let $J^+(l,i) := \{ j : 1 \leq j \leq n_{l-1}, \mW_{i j}^l < 0 \}$. 
Conversely, 
let $I(l,j) := \{ i : i \in U^+_{l+1}, j \in J^+(l+1,i) \} \cup \{ i : i \in U^-_{l+1}, j \in J^-(l+1,i) \}$ 
be the %set of 
units in layer $l+1$ that may be locally unstable if unit $j$ in layer $l$ is active. 

\begin{prp}
$\mathcal{I}_l(k) \leq$ $\max\limits_{S} \left\{ \left| \bigcup\limits_{j \in S} I(l-1,j)  \right| : S \in \sS^k \right\}$, where $\sS^k = \left\{ S : S \subseteq \{ 1, \ldots, n_{l-1} \}, |S| \leq k \right\}$.
\label{prp.Il}
\end{prp}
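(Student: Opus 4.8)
The plan is to pass from the global quantity $\mathcal{I}_l(k)$ to a per-region estimate, and then to recognize that estimate as a union of the sets $I(l-1,j)$. Fix a layer $l$ and a linear region $\mathcal{R}$ that is defined by layers $1$ to $l-1$; on $\mathcal{R}$ the activation sets $\sS^1,\ldots,\sS^{l-1}$ are constant, and I write $S := \sS^{l-1}$, so $|S| \le k$ under the hypothesis that at most $k$ units of layer $l-1$ are active in $\mathcal{R}$. The first step is to note that on $\mathcal{R}$ the pre-activation of a unit $i \in \sN_l$ simplifies to $\evg_i^l = \sum_{j \in S} \mW_{ij}^l \evh_j^{l-1} + \evb_i^l$, because $\evh_j^{l-1} > 0$ exactly for $j \in S$ and the remaining terms vanish; a unit $i$ contributes to the number of unstable units of layer $l$ in $\mathcal{R}$ only if this affine function takes both positive and nonpositive values on $\mathcal{R}$.

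Next I would discard the units that cannot contribute. If $i \notin U_l$ then $\evg_i^l$ has constant sign on the whole domain (the unit is stably active or stably inactive), hence certainly on $\mathcal{R}$, so only $i \in U_l = U^-_l \cup U^+_l$ matter. For $i \in U^-_l$ we have $\evb_i^l \le 0$, so $\evg_i^l > 0$ somewhere on $\mathcal{R}$ forces $\sum_{j \in S}\mW_{ij}^l \evh_j^{l-1} > -\evb_i^l \ge 0$, which in turn forces $\mW_{ij}^l > 0$ for some $j \in S$, i.e.\ $S \cap J^-(l,i) \ne \emptyset$; this is precisely the single-region form of the argument behind inequality~(\ref{eq:to_active}). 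Symmetrically, for $i \in U^+_l$ we have $\evb_i^l > 0$, and $\evg_i^l \le 0$ somewhere on $\mathcal{R}$ forces $S \cap J^+(l,i) \ne \emptyset$, mirroring inequality~(\ref{eq:to_inactive}). Hence the units of layer $l$ that are unstable on $\mathcal{R}$ all lie in $\{\, i \in U^-_l : S \cap J^-(l,i) \ne \emptyset\,\} \cup \{\, i \in U^+_l : S \cap J^+(l,i) \ne \emptyset\,\}$.

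The last step is pure bookkeeping. Since $U^-_l$ and $U^+_l$ are disjoint, a unit $i$ lies in the set just displayed if and only if there is some $j \in S$ with either $i \in U^+_l$ and $j \in J^+(l,i)$, or $i \in U^-_l$ and $j \in J^-(l,i)$ — that is, if and only if $i \in I(l-1,j)$ for some $j \in S$. Therefore the number of units of layer $l$ unstable on $\mathcal{R}$ is at most $\bigl|\bigcup_{j\in S} I(l-1,j)\bigr|$, and since $|S| \le k$ this is at most $\max_{S' \in \sS^k}\bigl|\bigcup_{j\in S'} I(l-1,j)\bigr|$. Taking the maximum of the left-hand side over all linear regions $\mathcal{R}$ defined by layers $1$ to $l-1$ with at most $k$ active units in layer $l-1$ yields $\mathcal{I}_l(k) \le \max_{S \in \sS^k}\bigl|\bigcup_{j\in S} I(l-1,j)\bigr|$, which is the claim.

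I do not expect a substantial obstacle here; the single point that needs care is the meaning of ``unstable on $\mathcal{R}$'' — it refers to the hyperplane $\mW_i^l \vh^{l-1} + \evb_i^l = 0$ genuinely splitting the region $\mathcal{R}$, not to global instability — and to writing the restriction of $\evg_i^l$ to $\mathcal{R}$ correctly so that the sign arguments behind (\ref{eq:to_active}) and (\ref{eq:to_inactive}) transfer verbatim from the whole input space to a single region.
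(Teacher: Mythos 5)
Your argument is correct and follows the same route the paper intends: the paper states Proposition~\ref{prp.Il} without a formal proof, relying on the per-region reading of inequalities~(\ref{eq:to_active}) and~(\ref{eq:to_inactive}) and the definitions of $J^-$, $J^+$, and $I(l-1,j)$, which is exactly the region-by-region sign argument you spell out. Your explicit handling of the distinction between global and local (in-region) instability, and the observation that globally stable units can be discarded outright, fills in the details faithfully.
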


%In other words, 
%we look for the subsets of at most $k$ units in layer $l-1$ 
%that together may affect the stability of the largest number of units in layer $l$. 
%
Next we bound the value of $\mathcal{A}_l(k)$ %. 
%In this case, 
%we consider 
by considering 
a larger subset of the units in layer $l$ 
that only excludes locally inactive units. 
Let $n^+_l$ denote the number of stably active units in layer $l$, 
which is such that $n^+_l \leq n_l - |U_l|$, 
and 
let $I^-(l,j) :=$ $\{ i : i \in U^-_{l+1}, j \in J^-(l+1,i) \}$ be 
the %set of 
inactive leaning units in layer $l+1$ that can be activated if unit $j$ in layer $l$ is active.

\begin{prp}
For $\sS^k$ defined as before, 
$\mathcal{A}_l(k) \leq n^+_l + |U^+_l| +$ $\max\limits_{S} \left\{ \left| \bigcup\limits_{j \in S} I^-(l-1,j)  \right| : S \in \sS^k \right\}$.
\label{prp.Al}
\end{prp}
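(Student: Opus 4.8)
The plan is to fix an arbitrary linear region $R$ of the subnetwork formed by layers $1$ through $l-1$, let $S \subseteq \{1,\ldots,n_{l-1}\}$ be its (constant) activation set at layer $l-1$, so that $|S| = k$ and hence $S \in \sS^k$, and then count how many units of layer $l$ can be active somewhere inside $R$. The activation hyperplanes of layer $l$ subdivide $R$, but every such subregion inherits the same activation set $S$ at layer $l-1$, so the valid inequalities of Section~\ref{sec:mip} apply with this fixed $S$. Summing the contributions of a partition of $\sN_l$ and taking the maximum over all such $R$ will yield the claimed bound on $\mathcal{A}_l(k)$.

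First I would partition $\sN_l$ according to stability. A stably inactive unit is never active and contributes $0$; the stably active units are active everywhere and contribute exactly $n^+_l$; and every remaining (unstable) unit is, according to the sign of its bias, either inactive leaning ($i \in U^-_l$, the case $b_i^l \le 0$) or active leaning ($i \in U^+_l$, the case $b_i^l > 0$), and $U^+_l$ and $U^-_l$ together exhaust $U_l$ since every real bias falls into one of the two cases. For the active-leaning units I would bound their contribution crudely by $|U^+_l|$, discarding inequality~(\ref{eq:to_inactive}) entirely. This disjoint split of $\sN_l$ into the stably active set, $U^+_l$, $U^-_l$, and the stably inactive set is what makes the eventual sum of contributions legitimate without double counting.

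The substantive step is bounding the number of active inactive-leaning units in $R$. If $i \in U^-_l$ is active at some point of $R$ then $\evh^l_i > 0$ there, so the binary variable satisfies $z_i^l = 1$; since $b_i^l \le 0$, inequality~(\ref{eq:to_active}) forces $\sum_{j \in J^-(l,i)} z_j^{l-1} \ge 1$ at that point, and since the layer-$(l-1)$ activation there is $S$ this means $S \cap J^-(l,i) \neq \emptyset$. By the definition $I^-(l-1,j) = \{\, i \in U^-_l : j \in J^-(l,i) \,\}$, this is exactly the statement $i \in \bigcup_{j \in S} I^-(l-1,j)$. Hence the number of active inactive-leaning units in $R$ is at most $\big|\bigcup_{j \in S} I^-(l-1,j)\big| \le \max_{S' \in \sS^k} \big|\bigcup_{j \in S'} I^-(l-1,j)\big|$. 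Adding the three contributions, the number of units active in layer $l$ within $R$ is at most $n^+_l + |U^+_l| + \max_{S' \in \sS^k} \big|\bigcup_{j \in S'} I^-(l-1,j)\big|$, and since $R$ was an arbitrary region with $k$ active units in layer $l-1$, this bounds $\mathcal{A}_l(k)$.

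I do not expect a genuine obstacle here; the steps needing the most care are (i) checking that the four stability classes partition $\sN_l$ so that summing their contributions is valid, (ii) observing that $S \mapsto \big|\bigcup_{j \in S} I^-(l-1,j)\big|$ is monotone, so that allowing $|S| \le k$ rather than $|S| = k$ in $\sS^k$ does not change the maximum, and (iii) the $l \ge 2$ restriction under which (\ref{eq:to_active}) is stated; for $l = 1$ the same argument goes through verbatim once the inputs are treated as the $n_0$ always-active units feeding the network, as assumed in the proof of Theorem~\ref{thm:upper_bound}.
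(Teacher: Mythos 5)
Your proof is correct and follows essentially the same reasoning the paper sketches in the prose surrounding Proposition~\ref{prp.Al} (the paper gives no separate formal proof): stably active units contribute $n^+_l$, active-leaning units are bounded crudely by $|U^+_l|$, and an active inactive-leaning unit must, via inequality~(\ref{eq:to_active}), lie in $I^-(l-1,j)$ for some active $j$ in layer $l-1$, giving the union bound. Your added remarks on the monotonicity of the union over $|S|\leq k$ and on the $l=1$ case are consistent with the paper's conventions.
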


In practice, however, we may only need to inspect a small number of such subsets. 
In the average-case analysis presented in Appendix~\ref{app:linear}, only $O(n_{l-1})$ subsets are needed. 
We also observed in the experiments that the minimum value of $k$ to maximize $\mathcal{I}_l(k)$ and  $\mathcal{A}_l(k)$ is rather small. 
%For other cases, we discuss later how to approximate $\mathcal{I}_l(k)$.
%
If not, we 
%We 
can approximate 
$\mathcal{I}_l(k)$ and $\mathcal{A}_l(k)$ with 
strong optimality guarantees ($1-\frac{1}{e})$ using simple greedy algorithms for submodular function maximization~\cite{George_Nemhauser_MP1978}. 
We discuss that possibility in  Appendix~\ref{ap:submodular_bounds}.

%The maximization functions in Propositions~\ref{prp.Il} and ~\ref{prp.Al} can be seen as maximization of submodular functions, which allows us to efficiently solve them using simple greedy algorithms and still obtain strong optimality guarantees ($1-\frac{1}{e})$~\cite{George_Nemhauser_MP1978} (See Appendix).

\section{Experiments}

We tested on rectifier networks trained on the MNIST benchmark dataset
~\cite{MNIST}, 
consisting of 22 units distributed in two hidden layers and 10 output units, 
with 10 distinct networks for each distribution of units between the hidden layers. 
%We used CPLEX Studio 12.8 as a solver and a machine with 40 Intel(R) Xeon(R) CPU E5-2640 v4 @ 2.40GHz processors and 132 GB of RAM. 
See Appendix~\ref{ap:exps} for more details about the networks and the implementation.
%We adapt Algorithm~\ref{alg:mip_bound} to approximate the number of linear regions by ignoring solutions with value 0.
For each size of parity constraints $k$, which we denote as XOR-$k$, we measure the time to find the smallest coefficients $H_i^l$ and $\bar{H}_i^l$ for each unit along with the subsequent time of \texttt{MIPBound} (Algorithm~\ref{alg:mip_bound}). 
We let \texttt{MIPBound} run for enough steps to obtain a probability of 99.5\% in case all tested constraints of a given size preserve the formulation feasible, and we report the largest lower bound with probability at least 95\%. 
We also use the approach in~\cite{BoundingCounting} to count the exact number of regions for benchmarking. 
Since counting can be faster than sampling for smaller sets, 
we define a DNN with $\eta < 12$ as small and large otherwise. 
We use Configuration Upper Bound~(Configuration UB) for the bound in~\cite{BoundingCounting}.  %\citeauthor{BoundingCounting}~\citeyearpar{BoundingCounting}. 
The upper bound from Theorem~\ref{thm:upper_bound}, 
which we denote as Empirical Upper Bound~(Empirical UB), 
is computed at a small fraction of the time to obtain coefficients $H_i^l$ and $\bar{H}_i^l$ for the lower bound. 
We denote as APP-$k$ the average between the XOR-$k$ Lower Bound (LB) and Empirical UB.

Table~\ref{tab:ub_gap} shows the gap closed by Empirical UB. 
Figure~\ref{fig:bounds}~(top) compares the bounds with the number of linear regions. 
Figure~\ref{fig:bounds}~(bottom) compares the time for exact counting and approximation. 
Figure~\ref{fig:accuracy} compares APP-$k$ with the accuracy of networks not having particularly narrow layers, 
in which case the number of regions relates to network accuracy~\cite{BoundingCounting}. 
%Appendix~\ref{ap:ai_values} shows that 
%In experiments, we found that 
%the largest value of $k$ to compute all $\mathcal{A}_l(k)$ and $\mathcal{I}_l(k)$ remains small and so does the number of inspected subsets.

\begin{table}[b]
    \centering
    \caption{Gap (\%) closed by Empirical UB between Configuration UB and the number of regions for widths $n_1;n_2;n_3$.}
    \label{tab:ub_gap}
    \vskip 0.15in
    \footnotesize
    \begin{tabular}{cccccccc}
        \textbf{Widths} & \textbf{Gap} & & \textbf{Widths} & \textbf{Gap} & & \textbf{Widths} & \textbf{Gap} \\
                 %\hline
                 \cline{1-2} \cline{4-5} \cline{7-8}
         \noalign{\vskip2pt}
         1;21;10 & 73.1 & ~ & 8;14;10 & 0 & ~ & 15;7;10 & 0.5\\
         2;20;10 & 17.8 && 9;13;10 & 0 && 16;6;10 & 1 \\ 
         3;19;10 & 10.4 && 10;12;10 & 1 && 17;5;10 & 1.8 \\
         4;18;10 & 3.1 && 11;11;10 & 0 && 18;4;10 & 3.4 \\
         5;17;10 & 3.9 && 12;10;10 & 0 && 19;3;10 & 9.5 \\
         6;16;10 & 2 && 13;9;10 & 0.1 && 20;2;10 & 44.5 \\
         7;15;10 & 1.1 && 14;8;10 & 0.2 && 21;1;10 & 98.3 \\
    \end{tabular}
\end{table}

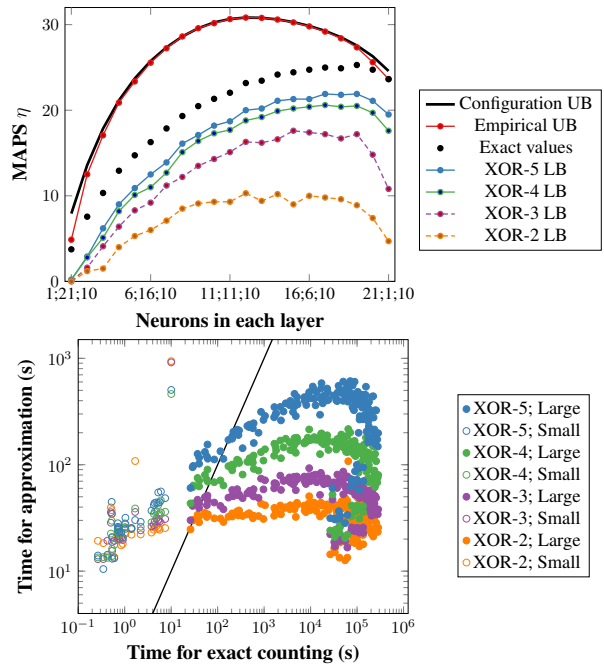
\begin{figure}[!ht]
  \centering
\pgfplotsset{every error bar/.style={ultra thick, color=color2b}}
\begin{tikzpicture}[scale=0.64]
\begin{axis}[filter discard warning=false, enlargelimits=false, enlarge x limits=0.02, 
title style={at={(0.5,-0.25)},anchor=north,yshift=-0.1},
xlabel={\Large \textbf{Neurons in each layer}},
symbolic x coords={
1;21;10,
2;20;10,
3;19;10,
4;18;10,
5;17;10,
6;16;10,
7;15;10,
8;14;10,
9;13;10,
10;12;10,
11;11;10,
12;10;10,
13;9;10,
14;8;10,
15;7;10,
16;6;10,
17;5;10,
18;4;10,
19;3;10,
20;2;10,
21;1;10,
},
xtick=\empty,
extra x ticks={1;21;10,6;16;10,11;11;10,16;6;10,21;1;10,},
ylabel={\Large \textbf{MAPS $\eta$}},ymin=0,ymax=32, 
every axis y label/.style={at={(ticklabel cs:0.5)},rotate=90,anchor=near ticklabel}, 
legend style={at={(1.35,0.7)},anchor=north}
]
\pgfplotstableread{
x         LB2 LB3 LB4 LB5 Actual UB
1;21;10 0 0 0.2 0.2 3.74578398935473 4.871661452
2;20;10 1.2 1.6 2.8 2.9 7.55927023280925 12.51012978
3;19;10 1.5 4.1 5.1 6.2 10.3297919944285 17.06857397
4;18;10 4 6.4 8.2 9 12.9304728099384 20.88539862
5;17;10 5.3 8.3 10.1 10.9 14.7301783839329 23.35997465
6;16;10 6 9.2 11 12.5 16.2731803615759 25.55426499
7;15;10 7.1 11.2 12.7 13.9 17.8701232759008 27.24137191
8;14;10 8.5 12.2 15.1 16.1 19.3202207206123 28.61644505
9;13;10 9.1 13.5 16.4 17.1 20.4784437009778 29.57705411
10;12;10 9.3 14.3 17.3 18.2 21.3292346036261 30.16964856
11;11;10 9.3 15.1 17.7 18.7 22.0421263485602 30.65206824
12;10;10 10.3 16.3 18.8 20 23.1744243554702 30.82179686
13;9;10 9.4 16.2 19.2 20.2 23.4493166229282 30.78711069
14;8;10 10.2 16.6 19.9 21.1 24.1649542943734 30.59319168
15;7;10 9 17.6 20.2 21.3 24.4277197234678 30.25915542
16;6;10 10 17.4 20.4 21.3 24.734419015675 29.79429629
17;5;10 9.8 17.2 20.6 21.9 24.9841964977662 29.18751275
18;4;10 9.6 16.7 20.4 21.8 24.8980204475814 28.39745962
19;3;10 8.9 17.2 20.5 21.9 25.2842687073009 27.34556271
20;2;10 7.4 14.8 19.7 21.1 24.7355346400191 25.60603956
21;1;10 4.7 10.8 17.6 19.5 23.6276496907593 23.64355125
}{\myLBtable}
\addplot[ultra thick,color=black] coordinates {
(1;21;10,7.9248125036)
(2;20;10,13.5839054522)
(3;19;10,17.853973481)
(4;18;10,21.1436454091)
(5;17;10,23.7136171495)
(6;16;10,25.7422203244)
(7;15;10,27.350022394)
(8;14;10,28.6165002479)
(9;13;10,29.5770783425)
(10;12;10,30.2569334492)
(11;11;10,30.6546206004)
(12;10;10,30.8254896699)
(13;9;10,30.7918099495)
(14;8;10,30.6078270702)
(15;7;10,30.2891810339)
(16;6;10,29.8464014684)
(17;5;10,29.2663182587)
(18;4;10,28.5213465663)
(19;3;10,27.562042845)
(20;2;10,26.3037633669)
(21;1;10,24.5849618701)
};
\addplot+[thick, mark=*,mark size=1.5,color=color1] 
  plot[thick, error bars/.cd, y dir=both, y explicit]
  table[x=x,y=UB] {\myLBtable};
\addplot[only marks, mark=*,mark size=1.5,color=black] 
  plot[thick, error bars/.cd, y dir=both, y explicit]
  %table[x=x,y=y,y error plus expr=\thisrow{ymax}-\thisrow{y},y error minus expr=\thisrow{y}-\thisrow{ymin}] {\mytable};
  table[x=x,y=Actual] {\myLBtable};
\addplot+[thick, mark=*,mark size=1.5,color=color2] 
  plot[thick, error bars/.cd, y dir=both, y explicit]
  table[x=x,y=LB5] {\myLBtable};
\addplot+[thick, mark=*,mark size=1.5,color=color3] 
  plot[thick, error bars/.cd, y dir=both, y explicit]
  table[x=x,y=LB4] {\myLBtable};
\addplot+[thick, mark=*,mark size=1.5,color=color4] 
  plot[thick, error bars/.cd, y dir=both, y explicit]
  table[x=x,y=LB3] {\myLBtable};
\addplot+[thick, mark=*,mark size=1.5,color=color5] 
  plot[thick, error bars/.cd, y dir=both, y explicit]
  table[x=x,y=LB2] {\myLBtable};
\legend{\large Configuration UB, \large Empirical UB, \large Exact values, \large XOR-5 LB, \large XOR-4 LB, \large XOR-3 LB, \large XOR-2 LB}
\end{axis}
\end{tikzpicture}
~~~
\begin{tikzpicture}[scale=0.64]
\begin{loglogaxis}[%
scatter/classes={%
    d1={mark=*,color=color2},
    d2={mark=o,color=color2},
    c1={mark=*,color=color3},
    c2={mark=o,color=color3},
    b1={mark=*,color=color4},
    b2={mark=o,color=color4},
    a1={mark=*,color=color5},
    a2={mark=o,color=color5}
    }, ymin=4, ymax=1.5e3, xmin=0.1, 
    xlabel={\Large \textbf{Time for exact counting (s)}},
    ylabel={\Large \textbf{Time for approximation (s)}},
    legend style={at={(1.35,0.8)},anchor=north}]
\addplot[scatter,only marks,%
    scatter src=explicit symbolic]%
table[meta=label] {
x y label

0.523985 37.41227 a2
0.750113 20.39725 a2
2.21071 22.13323 a2
58.7192 32.2739 a1
289.091 37.2055 a1
558.56 33.149 a1
1810.64 36.4536 a1
5099.98 43.9032 a1
9086.78 43.2874 a1
16847.7 43.8735 a1
39948.5 39.6082 a1
46983.8 38.6134 a1
50439.4 29.8445 a1
79843.2 35.5874 a1
83023.3 39.6421 a1
179140 39.8935 a1
257249 28.5591 a1
160641 29.7121 a1
192763 23.9198 a1
98753.8 20.42009 a1
41936.9 13.3819 a1
0.34674 18.33418 a2
0.711875 24.89808 a2
6.01416 28.6179 a2
61.4436 34.2519 a1
262.848 36.5129 a1
344.769 32.9911 a1
2610.26 38.8657 a1
3524.94 38.2715 a1
7306.76 41.3585 a1
4806.17 32.3313 a1
19296.1 34.7248 a1
50413.7 46.1903 a1
77955.3 38.0241 a1
151930 42.8835 a1
92748.2 29.0386 a1
113161 32.4218 a1
141581 38.8863 a1
107941 40.1616 a1
243487 25.0923 a1
136113 17.89424 a1
43119.4 18.56303 a1
0.505154 39.733 a2
0.713301 20.45856 a2
5.23014 25.58737 a2
29.8465 30.31732 a1
217.645 34.2788 a1
599.031 32.7052 a1
1662.48 39.3269 a1
4415.62 40.1353 a1
11136.6 44.2342 a1
12801.3 41.0983 a1
31301.6 40.8586 a1
50211.3 39.4847 a1
70893.6 33.1357 a1
155138 37.9937 a1
93328 37.5232 a1
180351 28.3931 a1
151570 34.0629 a1
160094 35.826 a1
271692 25.407 a1
76404.6 21.7147 a1
32286.7 16.40233 a1
0.535543 21.0332 a2
1.39969 22.02606 a2
5.66663 24.50358 a2
39.9116 28.7588 a1
170.562 32.4137 a1
435.396 34.1192 a1
2306.31 34.184 a1
4503.86 38.0863 a1
9892.26 37.4725 a1
18249 41.4109 a1
22724.8 39.3619 a1
55690.9 48.4515 a1
91885.6 30.5329 a1
89001.6 31.4354 a1
91086.2 34.0416 a1
208890 35.7167 a1
244444 35.9283 a1
236372 32.6224 a1
223123 21.1453 a1
99843.2 18.49884 a1
61792.1 13.40483 a1
1.67956 108.34816 a2
0.910853 18.04294 a2
3.77542 23.05085 a2
40.1022 32.6575 a1
205.432 34.9206 a1
572.159 31.4086 a1
1642.13 33.1279 a1
4503.89 39.4653 a1
11596.9 41.7796 a1
23435.8 41.5682 a1
15968 32.6274 a1
46428.4 45.1964 a1
58183.7 39.4351 a1
103388 30.6525 a1
221887 39.7836 a1
115735 42.7033 a1
199892 29.9154 a1
178872 25.7385 a1
279029 23.3706 a1
90260.8 18.31344 a1
40707.9 20.62233 a1
0.538518 34.69654 a2
0.838036 17.46839 a2
7.43046 25.41833 a2
28.9055 29.9289 a1
133.31 28.9616 a1
1045.23 41.6371 a1
2232.75 41.3857 a1
4180.16 40.1847 a1
7947.36 35.8701 a1
16612.1 43.7274 a1
37630 41.7051 a1
71146.5 38.7682 a1
111095 29.30531 a1
87219.5 31.4183 a1
88616.6 34.8373 a1
148153 33.1926 a1
145205 45.9859 a1
231745 30.5 a1
237898 24.7527 a1
105552 18.6228 a1
26454.1 14.38388 a1
0.598082 13.57509 a2
0.76804 19.6711 a2
5.27298 25.06705 a2
37.0087 33.8615 a1
205.595 32.6251 a1
594.899 33.1899 a1
1764.7 36.8542 a1
5061.31 42.7829 a1
7432.41 37.1685 a1
17154.2 42.7636 a1
37107.3 34.4812 a1
55171.2 42.9657 a1
63420.4 30.6738 a1
116169 33.2679 a1
165249 40.7468 a1
104835 48.3922 a1
169712 44.421 a1
173439 24.6773 a1
176431 25.124 a1
112160 17.091 a1
33222.5 15.01488 a1
0.42482 13.77212 a2
1.01427 22.95337 a2
4.2449 24.46413 a2
60.5177 33.6749 a1
280.007 33.3976 a1
488.32 34.3588 a1
2916.75 39.8918 a1
3657.88 41.6615 a1
9321.74 50.5936 a1
17185.8 41.6065 a1
38195.8 37.1422 a1
64622.5 107.9193 a1
75134.3 40.258 a1
91463.1 33.0598 a1
139974 44.3504 a1
128659 33.8705 a1
159162 29.8185 a1
226335 31.8448 a1
284091 25.6656 a1
134924 21.8666 a1
29609.1 43.00646 a1
0.265353 19.27836 a2
0.721769 21.17796 a2
4.39644 25.70935 a2
26.0898 24.69031 a1
84.7606 31.365 a1
475.25 35.0608 a1
968.125 30.9258 a1
5234.72 42.2911 a1
8693.03 40.9433 a1
13486.7 36.5096 a1
24188.2 37.2521 a1
45810.6 41.8726 a1
68342.4 30.4907 a1
128518 28.4241 a1
76166.2 29.298 a1
85726 38.5444 a1
114198 30.9336 a1
227964 39.0495 a1
242255 25.3497 a1
94370.5 20.84822 a1
55319.7 12.64584 a1
10.0393 942.00865 a2
1.0377 22.48391 a2
4.80232 26.36393 a2
53.3183 35.3884 a1
114.73 31.2591 a1
857.807 43.0333 a1
758.183 32.9547 a1
8089.28 38.9241 a1
11997.4 43.9194 a1
22700.3 42.4455 a1
35687.7 37.3316 a1
61652.8 45.777 a1
80391.4 34.0562 a1
111994 36.6646 a1
106684 31.944 a1
136547 32.0463 a1
135511 37.9001 a1
174890 34.3916 a1
203710 22.4674 a1
75109 18.73725 a1
33967.8 16.73236 a1

0.523985 34.53117 b2
0.750113 21.34348 b2
2.21071 24.20166 b2
58.7192 41.6446 b1
289.091 52.6055 b1
558.56 55.2306 b1
1810.64 65.1395 b1
5099.98 72.7435 b1
9086.78 78.0011 b1
16847.7 79.9659 b1
39948.5 69.052 b1
46983.8 68.0132 b1
50439.4 52.777 b1
79843.2 61.783 b1
83023.3 61.2645 b1
179140 52.3364 b1
257249 48.9978 b1
160641 49.5463 b1
192763 37.7058 b1
98753.8 25.4545 b1
41936.9 19.19663 b1
0.34674 13.0003 b2
0.711875 24.9112 b2
6.01416 35.6173 b2
61.4436 46.116 b1
262.848 58.9588 b1
344.769 48.4726 b1
2610.26 70.4473 b1
3524.94 76.1002 b1
7306.76 77.4676 b1
4806.17 57.6569 b1
19296.1 65.7335 b1
50413.7 86.2904 b1
77955.3 64.1184 b1
151930 71.2775 b1
92748.2 53.2482 b1
113161 49.0138 b1
141581 58.7939 b1
107941 52.5051 b1
243487 34.8275 b1
136113 22.90674 b1
43119.4 22.05562 b1
0.505154 28.5222 b2
0.713301 22.94224 b2
5.23014 29.7491 b2
29.8465 43.2635 b1
217.645 50.6435 b1
599.031 50.3611 b1
1662.48 67.1275 b1
4415.62 77.1238 b1
11136.6 79.3097 b1
12801.3 80.7533 b1
31301.6 68.3438 b1
50211.3 65.9904 b1
70893.6 61.0623 b1
155138 69.5868 b1
93328 61.7768 b1
180351 57.0902 b1
151570 51.6221 b1
160094 45.2454 b1
271692 36.6554 b1
76404.6 25.8122 b1
32286.7 18.79175 b1
0.535543 18.8092 b2
1.39969 25.4754 b2
5.66663 29.3107 b2
39.9116 39.2039 b1
170.562 49.0974 b1
435.396 53.884 b1
2306.31 57.9535 b1
4503.86 68.0659 b1
9892.26 70.1217 b1
18249 69.9485 b1
22724.8 67.7287 b1
55690.9 75.2419 b1
91885.6 61.1714 b1
89001.6 59.4302 b1
91086.2 51.9407 b1
208890 56.4021 b1
244444 51.0656 b1
236372 43.1146 b1
223123 33.8818 b1
99843.2 25.3606 b1
61792.1 16.66066 b1
1.67956 35.99196 b2
0.910853 19.62708 b2
3.77542 26.31096 b2
40.1022 42.9611 b1
205.432 57.8914 b1
572.159 51.4098 b1
1642.13 60.0213 b1
4503.89 74.2771 b1
11596.9 78.808 b1
23435.8 72.1956 b1
15968 52.0238 b1
46428.4 69.0086 b1
58183.7 60.4604 b1
103388 53.3433 b1
221887 59.3975 b1
115735 60.0358 b1
199892 48.4243 b1
178872 40.3399 b1
279029 34.2507 b1
90260.8 23.73828 b1
40707.9 25.8171 b1
0.538518 15.52554 b2
0.838036 20.85393 b2
7.43046 30.9014 b2
28.9055 41.0264 b1
133.31 38.1837 b1
1045.23 71.4714 b1
2232.75 78.9957 b1
4180.16 65.5807 b1
7947.36 65.8788 b1
16612.1 83.9194 b1
37630 80.8422 b1
71146.5 73.8616 b1
111095 57.13431 b1
87219.5 63.6322 b1
88616.6 65.2801 b1
148153 51.5609 b1
145205 57.9617 b1
231745 43.2349 b1
237898 37.2542 b1
105552 25.2931 b1
26454.1 22.23126 b1
0.598082 19.27161 b2
0.76804 20.08918 b2
5.27298 30.4207 b2
37.0087 46.9248 b1
205.595 51.1002 b1
594.899 58.977 b1
1764.7 66.7356 b1
5061.31 77.5625 b1
7432.41 69.8451 b1
17154.2 76.2665 b1
37107.3 60.4272 b1
55171.2 62.2163 b1
63420.4 57.3265 b1
116169 54.8797 b1
165249 55.608 b1
104835 67.0391 b1
169712 56.9882 b1
173439 37.3533 b1
176431 36.8134 b1
112160 22.99359 b1
33222.5 17.05103 b1
0.42482 19.33593 b2
1.01427 25.84128 b2
4.2449 27.05284 b2
60.5177 51.3219 b1
280.007 51.5801 b1
488.32 56.9349 b1
2916.75 71.4128 b1
3657.88 68.2816 b1
9321.74 92.8361 b1
17185.8 78.3903 b1
38195.8 68.9702 b1
64622.5 136.0172 b1
75134.3 65.4553 b1
91463.1 58.9961 b1
139974 59.7583 b1
128659 56.8463 b1
159162 49.5776 b1
226335 48.2671 b1
284091 38.1295 b1
134924 25.7455 b1
29609.1 47.1192 b1
0.265353 12.96773 b2
0.721769 23.16009 b2
4.39644 29.204 b2
26.0898 30.0876 b1
84.7606 44.1473 b1
475.25 55.8049 b1
968.125 58.6978 b1
5234.72 74.3317 b1
8693.03 74.7647 b1
13486.7 68.4426 b1
24188.2 66.7989 b1
45810.6 73.4624 b1
68342.4 62.7833 b1
128518 51.3233 b1
76166.2 56.5387 b1
85726 58.4424 b1
114198 48.8458 b1
227964 43.1264 b1
242255 35.373 b1
94370.5 25.3645 b1
55319.7 19.07175 b1
10.0393 915.94065 b2
1.0377 24.96587 b2
4.80232 31.1052 b2
53.3183 49.6945 b1
114.73 41.0082 b1
857.807 74.1013 b1
758.183 48.7678 b1
8089.28 72.8041 b1
11997.4 77.9766 b1
22700.3 74.2419 b1
35687.7 67.0663 b1
61652.8 77.772 b1
80391.4 72.354 b1
111994 56.2492 b1
106684 52.7008 b1
136547 59.8461 b1
135511 59.8406 b1
174890 46.8406 b1
203710 33.2626 b1
75109 24.1304 b1
33967.8 22.6093 b1

0.523985 29.78027 c2
0.750113 23.06955 c2
2.21071 26.1302 c2
58.7192 72.2341 c1
289.091 103.8543 c1
558.56 110.3675 c1
1810.64 136.5235 c1
5099.98 175.6644 c1
9086.78 169.4757 c1
16847.7 195.1226 c1
39948.5 169.862 c1
46983.8 182.0784 c1
50439.4 157.9252 c1
79843.2 137.4435 c1
83023.3 134.79 c1
179140 99.4423 c1
257249 107.985 c1
160641 113.4489 c1
192763 74.7413 c1
98753.8 41.3055 c1
41936.9 22.99173 c1
0.34674 13.55979 c2
0.711875 25.98493 c2
6.01416 46.2303 c2
61.4436 69.4005 c1
262.848 111.5111 c1
344.769 80.9091 c1
2610.26 145.7986 c1
3524.94 162.9658 c1
7306.76 158.5242 c1
4806.17 129.4493 c1
19296.1 155.5369 c1
50413.7 174.6126 c1
77955.3 164.8897 c1
151930 158.1167 c1
92748.2 120.5616 c1
113161 97.8051 c1
141581 122.6181 c1
107941 84.4012 c1
243487 66.1322 c1
136113 35.87094 c1
43119.4 26.0079 c1
0.505154 22.5553 c2
0.713301 24.35952 c2
5.23014 33.7545 c2
29.8465 63.7701 c1
217.645 98.7034 c1
599.031 91.3936 c1
1662.48 157.7163 c1
4415.62 134.9286 c1
11136.6 175.9285 c1
12801.3 176.7265 c1
31301.6 163.9077 c1
50211.3 145.7142 c1
70893.6 155.4197 c1
155138 180.0009 c1
93328 137.114 c1
180351 128.629 c1
151570 126.1446 c1
160094 85.4903 c1
271692 68.1769 c1
76404.6 41.4013 c1
32286.7 23.0903 c1
0.535543 16.1951 c2
1.39969 24.80997 c2
5.66663 34.2906 c2
39.9116 62.5649 c1
170.562 96.9867 c1
435.396 106.4959 c1
2306.31 115.1723 c1
4503.86 177.9003 c1
9892.26 170.5099 c1
18249 159.917 c1
22724.8 161.3994 c1
55690.9 183.6923 c1
91885.6 154.2988 c1
89001.6 139.8138 c1
91086.2 138.7799 c1
208890 155.8641 c1
244444 126.5804 c1
236372 88.2877 c1
223123 74.2995 c1
99843.2 37.4358 c1
61792.1 21.97606 c1
1.67956 38.81486 c2
0.910853 20.4743 c2
3.77542 28.7592 c2
40.1022 70.9175 c1
205.432 100.4265 c1
572.159 104.4518 c1
1642.13 121.4348 c1
4503.89 153.6264 c1
11596.9 172.502 c1
23435.8 170.2653 c1
15968 125.8666 c1
46428.4 169.7745 c1
58183.7 132.0276 c1
103388 137.6862 c1
221887 148.8648 c1
115735 129.5294 c1
199892 117.6672 c1
178872 89.7864 c1
279029 76.1438 c1
90260.8 36.35438 c1
40707.9 29.7326 c1
0.538518 13.89363 c2
0.838036 22.25616 c2
7.43046 36.1558 c2
28.9055 61.4617 c1
133.31 63.9166 c1
1045.23 139.7993 c1
2232.75 168.8228 c1
4180.16 136.3321 c1
7947.36 162.901 c1
16612.1 175.4205 c1
37630 174.5718 c1
71146.5 200.8613 c1
111095 138.66511 c1
87219.5 140.6271 c1
88616.6 167.4624 c1
148153 114.2262 c1
145205 128.7579 c1
231745 76.6325 c1
237898 79.951 c1
105552 39.2515 c1
26454.1 24.08906 c1
0.598082 17.65871 c2
0.76804 22.07503 c2
5.27298 41.7513 c2
37.0087 75.1826 c1
205.595 88.5753 c1
594.899 122.1529 c1
1764.7 148.127 c1
5061.31 179.2949 c1
7432.41 163.3146 c1
17154.2 162.5585 c1
37107.3 149.8068 c1
55171.2 149.7524 c1
63420.4 131.9012 c1
116169 141.3581 c1
165249 130.9916 c1
104835 136.2671 c1
169712 112.7528 c1
173439 74.0333 c1
176431 68.1366 c1
112160 31.15719 c1
33222.5 20.94128 c1
0.42482 13.33487 c2
1.01427 25.71982 c2
4.2449 34.4855 c2
60.5177 90.1251 c1
280.007 109.2818 c1
488.32 115.178 c1
2916.75 175.5842 c1
3657.88 137.5294 c1
9321.74 214.8241 c1
17185.8 197.0381 c1
38195.8 157.9734 c1
64622.5 216.5399 c1
75134.3 141.4222 c1
91463.1 141.3064 c1
139974 154.2218 c1
128659 125.8665 c1
159162 122.9198 c1
226335 113.849 c1
284091 80.2622 c1
134924 40.854 c1
29609.1 52.5061 c1
0.265353 13.4659 c2
0.721769 23.56119 c2
4.39644 36.0791 c2
26.0898 43.3554 c1
84.7606 68.5531 c1
475.25 106.8274 c1
968.125 109.4267 c1
5234.72 166.0214 c1
8693.03 179.7572 c1
13486.7 171.3711 c1
24188.2 137.7392 c1
45810.6 192.3894 c1
68342.4 158.8761 c1
128518 140.4754 c1
76166.2 148.1642 c1
85726 124.9893 c1
114198 102.9617 c1
227964 87.4688 c1
242255 61.8382 c1
94370.5 36.645 c1
55319.7 20.48595 c1
10.0393 462.41665 c2
1.0377 27.5181 c2
4.80232 36.6874 c2
53.3183 80.3457 c1
114.73 71.2448 c1
857.807 131.4198 c1
758.183 87.9319 c1
8089.28 167.0115 c1
11997.4 181.8237 c1
22700.3 189.1501 c1
35687.7 139.237 c1
61652.8 186.9226 c1
80391.4 158.416 c1
111994 146.6927 c1
106684 134.6657 c1
136547 142.5987 c1
135511 111.4017 c1
174890 97.7738 c1
203710 72.1331 c1
75109 32.6329 c1
33967.8 24.4561 c1

0.523985 44.79117 d2
0.750113 25.6307 d2
2.21071 29.0258 d2
58.7192 119.5547 d1
289.091 210.0074 d1
558.56 237.0911 d1
1810.64 303.8925 d1
5099.98 432.8454 d1
9086.78 452.3737 d1
16847.7 521.1696 d1
39948.5 551.913 d1
46983.8 496.3724 d1
50439.4 467.9072 d1
79843.2 348.5485 d1
83023.3 335.72 d1
179140 334.6038 d1
257249 324.01 d1
160641 282.8845 d1
192763 214.969 d1
98753.8 99.6933 d1
41936.9 31.20083 d1
0.34674 10.4614 d2
0.711875 28.1973 d2
6.01416 56.0553 d2
61.4436 113.9511 d1
262.848 213.6331 d1
344.769 143.0897 d1
2610.26 316.7756 d1
3524.94 363.4988 d1
7306.76 431.4452 d1
4806.17 286.6103 d1
19296.1 445.6189 d1
50413.7 604.6036 d1
77955.3 433.8257 d1
151930 453.9637 d1
92748.2 384.0876 d1
113161 266.2219 d1
141581 413.2707 d1
107941 199.4981 d1
243487 158.9967 d1
136113 70.02454 d1
43119.4 33.3107 d1
0.505154 35.758 d2
0.713301 26.0391 d2
5.23014 43.7552 d2
29.8465 97.4492 d1
217.645 203.8169 d1
599.031 160.0454 d1
1662.48 348.4283 d1
4415.62 351.8216 d1
11136.6 543.3105 d1
12801.3 361.8845 d1
31301.6 447.7487 d1
50211.3 398.9492 d1
70893.6 428.7347 d1
155138 541.0849 d1
93328 376.659 d1
180351 387.808 d1
151570 355.3476 d1
160094 205.2803 d1
271692 172.639 d1
76404.6 72.4293 d1
32286.7 31.12 d1
0.535543 20.715 d2
1.39969 25.1199 d2
5.66663 44.5316 d2
39.9116 102.2036 d1
170.562 171.2167 d1
435.396 231.4941 d1
2306.31 261.1175 d1
4503.86 464.0213 d1
9892.26 421.6049 d1
18249 377.944 d1
22724.8 402.7074 d1
55690.9 509.8653 d1
91885.6 452.4758 d1
89001.6 408.9618 d1
91086.2 395.1169 d1
208890 435.5481 d1
244444 324.8184 d1
236372 214.3837 d1
223123 176.1006 d1
99843.2 72.2735 d1
61792.1 33.77316 d1
1.67956 30.20026 d2
0.910853 20.67305 d2
3.77542 37.0577 d2
40.1022 116.586 d1
205.432 175.5798 d1
572.159 220.2351 d1
1642.13 279.4098 d1
4503.89 356.7504 d1
11596.9 513.846 d1
23435.8 480.4383 d1
15968 295.1646 d1
46428.4 472.2965 d1
58183.7 424.9236 d1
103388 484.5462 d1
221887 440.5178 d1
115735 398.8344 d1
199892 320.3322 d1
178872 229.1628 d1
279029 148.3234 d1
90260.8 86.78848 d1
40707.9 36.5889 d1
0.538518 13.08734 d2
0.838036 23.2979 d2
7.43046 48.4737 d2
28.9055 94.761 d1
133.31 113.5686 d1
1045.23 287.6843 d1
2232.75 349.8608 d1
4180.16 312.9841 d1
7947.36 415.453 d1
16612.1 418.8835 d1
37630 407.7528 d1
71146.5 610.9293 d1
111095 464.12511 d1
87219.5 405.7561 d1
88616.6 603.0824 d1
148153 295.9281 d1
145205 274.5999 d1
231745 199.1485 d1
237898 189.0237 d1
105552 92.1218 d1
26454.1 31.46186 d1
0.598082 13.22345 d2
0.76804 23.26873 d2
5.27298 54.9154 d2
37.0087 126.4438 d1
205.595 185.1442 d1
594.899 245.6199 d1
1764.7 298.972 d1
5061.31 383.4759 d1
7432.41 400.6286 d1
17154.2 430.4285 d1
37107.3 396.6928 d1
55171.2 409.2404 d1
63420.4 391.6902 d1
116169 431.5381 d1
165249 371.9306 d1
104835 412.7681 d1
169712 289.1282 d1
173439 162.9211 d1
176431 180.5572 d1
112160 57.69219 d1
33222.5 30.58248 d1
0.42482 14.25658 d2
1.01427 28.0974 d2
4.2449 41.7859 d2
60.5177 154.9267 d1
280.007 190.0116 d1
488.32 238.6968 d1
2916.75 378.0712 d1
3657.88 397.8314 d1
9321.74 480.6371 d1
17185.8 584.1301 d1
38195.8 477.6834 d1
64622.5 500.0729 d1
75134.3 468.5692 d1
91463.1 404.5954 d1
139974 453.3478 d1
128659 344.3985 d1
159162 345.9478 d1
226335 277.6906 d1
284091 198.1373 d1
134924 94.651 d1
29609.1 60.3235 d1
0.265353 14.02329 d2
0.721769 24.44601 d2
4.39644 43.6671 d2
26.0898 60.5155 d1
84.7606 112.1012 d1
475.25 225.9769 d1
968.125 263.6787 d1
5234.72 369.0664 d1
8693.03 421.6142 d1
13486.7 478.2231 d1
24188.2 407.9932 d1
45810.6 493.3744 d1
68342.4 439.7901 d1
128518 450.7424 d1
76166.2 429.1352 d1
85726 325.4913 d1
114198 286.215 d1
227964 186.3009 d1
242255 157.9488 d1
94370.5 68.4432 d1
55319.7 28.30995 d1
10.0393 504.53965 d2
1.0377 31.7626 d2
4.80232 44.2309 d2
53.3183 132.0024 d1
114.73 133.8976 d1
857.807 314.1448 d1
758.183 199.788 d1
8089.28 364.1115 d1
11997.4 416.8537 d1
22700.3 520.4891 d1
35687.7 404.942 d1
61652.8 553.0136 d1
80391.4 555.409 d1
111994 428.4287 d1
106684 371.0897 d1
136547 363.5307 d1
135511 267.6095 d1
174890 255.8241 d1
203710 170.3564 d1
75109 62.2127 d1
33967.8 31.0484 d1
    };
\addplot[thick,color=black] coordinates {(0.1,0.1)(2e3,2e3)};
\legend{\large XOR-5; Large,\large XOR-5; Small,\large XOR-4; Large,\large XOR-4; Small,\large XOR-3; Large,\large XOR-3; Small,\large XOR-2; Large,\large XOR-2; Small}
\end{loglogaxis}
\end{tikzpicture}
\caption{Top: Averages of the Empirical UB and XOR-$k$ LBs with probability 95\% compared with the Configuration UB and the exact number of regions % reported in~\cite{BoundingCounting} 
for 10 networks of each type. Bottom: Comparison of approximation vs. exact counting times by XOR size and number of regions.
%
%This paper shows improved bounds on the "number of linear regions" (typically used to study the expressiveness of {\sc DNN}s) of {\sc PWL} functions modeled by {\sc DNN}s that use rectified linear activation functions, and a method for exact counting of the number of such regions in trained networks. We compare upper bounds from the first and latest results in the literature~\cite{Montufar2014,Montufar2017} with our main result (See Theorem~\ref{thm:upper_bound_improved}). Using the proposed exact counting algorithm, we show the actual number of linear regions in 10 rectifier networks for MNIST digit recognition task with each configuration of two hidden layers totaling 22 neurons, reporting average and min-max range.
}
\label{fig:bounds}
\end{figure}

%\begin{figure}[!h]
%  \centering
%X
%\caption{Comparison of approximations vs. exact counting times.}
%\label{fig:times}
%\end{figure}

\begin{figure}[!ht]%[htbp]
  \centering
\begin{tikzpicture}[scale=0.62]
\begin{axis}[filter discard warning=false, enlargelimits=false, enlarge x limits=0.02, 
title style={at={(0.5,-0.25)},anchor=north,yshift=-0.1},
xlabel={\large \textbf{MAPS $\eta$}},
ylabel={\large \textbf{Training error (CE)}},%ymin=0.2,ymax=0.9, 
every axis y label/.style={at={(ticklabel cs:0.5)},rotate=90,anchor=near ticklabel}, 
legend style={at={(0.55,0.35)},anchor=north},
width=5.5cm
]
\addplot[scatter,only marks,mark options={scale=0.4},scatter src=explicit symbolic,scatter/classes={
            x={mark=*,color=black}
        }] table [x index=4, y index=5, col sep=comma,meta index = 7] {benchmark.txt} ;
\addplot[line width=2pt,color=black] table [x index=4, y={create col/linear regression={y=tr}},col sep=comma] {benchmark.txt} ;
\addplot[scatter,only marks,mark options={scale=0.4},scatter src=explicit symbolic,scatter/classes={
            x={mark=*,color=color5}
        }] table [x index=0, y index=5, col sep=comma,meta index = 7] {benchmark.txt} ;
\addplot[line width=2pt,color=color5] table [x index=0, y={create col/linear regression={y=tr}},col sep=comma] {benchmark.txt} ;
%\addplot[scatter,only marks,mark options={scale=0.4},scatter src=explicit symbolic,scatter/classes={
%            x={mark=*,color=color4}
%        }] table [x index=1, y index=5, col sep=comma,meta index = 7] {benchmark.txt} ;
%\addplot[line width=2pt,color=color4] table [x index=1, y={create col/linear regression={y=tr}},col sep=comma] {benchmark.txt} ;
\end{axis}
\end{tikzpicture}
~
\begin{tikzpicture}[scale=0.62]
\begin{axis}[filter discard warning=false, enlargelimits=false, enlarge x limits=0.02, 
title style={at={(0.5,-0.25)},anchor=north,yshift=-0.1},
xlabel={\large \textbf{MAPS $\eta$}},
ylabel={\large \textbf{Test error (\%MR)}},%ymin=0.2,ymax=0.9, 
every axis y label/.style={at={(ticklabel cs:0.5)},rotate=90,anchor=near ticklabel}, 
legend style={at={(1.32,0.7)},anchor=north},
width=5.5cm
]
\addplot[line width=2pt,color=black] table [x index=4, y={create col/linear regression={y=ts}}, col sep=comma] {benchmark.txt} ;
\addplot[line width=2pt,color=color5] table [x index=0, y={create col/linear regression={y=ts}}, col sep=comma] {benchmark.txt} ;
\addplot[scatter,only marks,mark options={scale=0.4},scatter src=explicit symbolic,scatter/classes={
            x={mark=*,color=black}
        }] table [x index=4, y index=6, col sep=comma,meta index = 7] {benchmark.txt} ;
\addplot[scatter,only marks,mark options={scale=0.4},scatter src=explicit symbolic,scatter/classes={
            x={mark=*,color=color5}
        }] table [x index=0, y index=6, col sep=comma,meta index = 7] {benchmark.txt} ;
\legend{\Large Exact, \Large APP-2}
\end{axis}
\end{tikzpicture}
~
\begin{tikzpicture}[scale=0.62]
\begin{axis}[filter discard warning=false, enlargelimits=false, enlarge x limits=0.02, 
title style={at={(0.5,-0.25)},anchor=north,yshift=-0.1},
xlabel={\large \textbf{MAPS $\eta$}},
ylabel={\large \textbf{Training error (CE)}},%ymin=0.2,ymax=0.9, 
every axis y label/.style={at={(ticklabel cs:0.5)},rotate=90,anchor=near ticklabel}, 
legend style={at={(0.55,0.35)},anchor=north},
width=5.5cm
]
\addplot[scatter,only marks,mark options={scale=0.4},scatter src=explicit symbolic,scatter/classes={
            x={mark=*,color=black}
        }] table [x index=4, y index=5, col sep=comma,meta index = 7] {benchmark.txt} ;
\addplot[line width=2pt,color=black] table [x index=4, y={create col/linear regression={y=tr}},col sep=comma] {benchmark.txt} ;
\addplot[scatter,only marks,mark options={scale=0.4},scatter src=explicit symbolic,scatter/classes={
            x={mark=*,color=color4}
        }] table [x index=1, y index=5, col sep=comma,meta index = 7] {benchmark.txt} ;
\addplot[line width=2pt,color=color4] table [x index=1, y={create col/linear regression={y=tr}},col sep=comma] {benchmark.txt} ;
%\addplot[scatter,only marks,mark options={scale=0.4},scatter src=explicit symbolic,scatter/classes={
%            x={mark=*,color=color4}
%        }] table [x index=1, y index=5, col sep=comma,meta index = 7] {benchmark.txt} ;
%\addplot[line width=2pt,color=color4] table [x index=1, y={create col/linear regression={y=tr}},col sep=comma] {benchmark.txt} ;
\end{axis}
\end{tikzpicture}
~
\begin{tikzpicture}[scale=0.62]
\begin{axis}[filter discard warning=false, enlargelimits=false, enlarge x limits=0.02, 
title style={at={(0.5,-0.25)},anchor=north,yshift=-0.1},
xlabel={\large \textbf{MAPS $\eta$}},
ylabel={\large \textbf{Test error (\%MR)}},%ymin=0.2,ymax=0.9, 
every axis y label/.style={at={(ticklabel cs:0.5)},rotate=90,anchor=near ticklabel}, 
legend style={at={(1.32,0.7)},anchor=north},
width=5.5cm
]
\addplot[line width=2pt,color=black] table [x index=4, y={create col/linear regression={y=ts}}, col sep=comma] {benchmark.txt} ;
\addplot[line width=2pt,color=color4] table [x index=1, y={create col/linear regression={y=ts}}, col sep=comma] {benchmark.txt} ;
\addplot[scatter,only marks,mark options={scale=0.4},scatter src=explicit symbolic,scatter/classes={
            x={mark=*,color=black}
        }] table [x index=4, y index=6, col sep=comma,meta index = 7] {benchmark.txt} ;
\addplot[scatter,only marks,mark options={scale=0.4},scatter src=explicit symbolic,scatter/classes={
            x={mark=*,color=color4}
        }] table [x index=1, y index=6, col sep=comma,meta index = 7] {benchmark.txt} ;
\legend{\Large Exact, \Large APP-3}
\end{axis}
\end{tikzpicture}

\begin{tikzpicture}[scale=0.62]
\begin{axis}[filter discard warning=false, enlargelimits=false, enlarge x limits=0.02, 
title style={at={(0.5,-0.25)},anchor=north,yshift=-0.1},
xlabel={\large \textbf{MAPS $\eta$}},
ylabel={\large \textbf{Training error (CE)}},%ymin=0.2,ymax=0.9, 
every axis y label/.style={at={(ticklabel cs:0.5)},rotate=90,anchor=near ticklabel}, 
legend style={at={(0.55,0.35)},anchor=north},
width=5.5cm
]
\addplot[scatter,only marks,mark options={scale=0.4},scatter src=explicit symbolic,scatter/classes={
            x={mark=*,color=black}
        }] table [x index=4, y index=5, col sep=comma,meta index = 7] {benchmark.txt} ;
\addplot[line width=2pt,color=black] table [x index=4, y={create col/linear regression={y=tr}},col sep=comma] {benchmark.txt} ;
\addplot[scatter,only marks,mark options={scale=0.4},scatter src=explicit symbolic,scatter/classes={
            x={mark=*,color=color3}
        }] table [x index=2, y index=5, col sep=comma,meta index = 7] {benchmark.txt} ;
\addplot[line width=2pt,color=color3] table [x index=2, y={create col/linear regression={y=tr}},col sep=comma] {benchmark.txt} ;
%\addplot[scatter,only marks,mark options={scale=0.4},scatter src=explicit symbolic,scatter/classes={
%            x={mark=*,color=color4}
%        }] table [x index=1, y index=5, col sep=comma,meta index = 7] {benchmark.txt} ;
%\addplot[line width=2pt,color=color4] table [x index=1, y={create col/linear regression={y=tr}},col sep=comma] {benchmark.txt} ;
\end{axis}
\end{tikzpicture}
~
\begin{tikzpicture}[scale=0.62]
\begin{axis}[filter discard warning=false, enlargelimits=false, enlarge x limits=0.02, 
title style={at={(0.5,-0.25)},anchor=north,yshift=-0.1},
xlabel={\large \textbf{MAPS $\eta$}},
ylabel={\large \textbf{Test error (\%MR)}},%ymin=0.2,ymax=0.9, 
every axis y label/.style={at={(ticklabel cs:0.5)},rotate=90,anchor=near ticklabel}, 
legend style={at={(1.32,0.7)},anchor=north},
width=5.5cm
]
\addplot[line width=2pt,color=black] table [x index=4, y={create col/linear regression={y=ts}}, col sep=comma] {benchmark.txt} ;
\addplot[line width=2pt,color=color3] table [x index=2, y={create col/linear regression={y=ts}}, col sep=comma] {benchmark.txt} ;
\addplot[scatter,only marks,mark options={scale=0.4},scatter src=explicit symbolic,scatter/classes={
            x={mark=*,color=black}
        }] table [x index=4, y index=6, col sep=comma,meta index = 7] {benchmark.txt} ;
\addplot[scatter,only marks,mark options={scale=0.4},scatter src=explicit symbolic,scatter/classes={
            x={mark=*,color=color3}
        }] table [x index=2, y index=6, col sep=comma,meta index = 7] {benchmark.txt} ;
\legend{\Large Exact, \Large APP-4}
\end{axis}
\end{tikzpicture}
~
\begin{tikzpicture}[scale=0.62]
\begin{axis}[filter discard warning=false, enlargelimits=false, enlarge x limits=0.02, 
title style={at={(0.5,-0.25)},anchor=north,yshift=-0.1},
xlabel={\large \textbf{MAPS $\eta$}},
ylabel={\large \textbf{Training error (CE)}},%ymin=0.2,ymax=0.9, 
every axis y label/.style={at={(ticklabel cs:0.5)},rotate=90,anchor=near ticklabel}, 
legend style={at={(0.55,0.35)},anchor=north},
width=5.5cm
]
\addplot[scatter,only marks,mark options={scale=0.4},scatter src=explicit symbolic,scatter/classes={
            x={mark=*,color=black}
        }] table [x index=4, y index=5, col sep=comma,meta index = 7] {benchmark.txt} ;
\addplot[line width=2pt,color=black] table [x index=4, y={create col/linear regression={y=tr}},col sep=comma] {benchmark.txt} ;
\addplot[scatter,only marks,mark options={scale=0.4},scatter src=explicit symbolic,scatter/classes={
            x={mark=*,color=color2}
        }] table [x index=3, y index=5, col sep=comma,meta index = 7] {benchmark.txt} ;
\addplot[line width=2pt,color=color2] table [x index=3, y={create col/linear regression={y=tr}},col sep=comma] {benchmark.txt} ;
%\addplot[scatter,only marks,mark options={scale=0.4},scatter src=explicit symbolic,scatter/classes={
%            x={mark=*,color=color4}
%        }] table [x index=1, y index=5, col sep=comma,meta index = 7] {benchmark.txt} ;
%\addplot[line width=2pt,color=color4] table [x index=1, y={create col/linear regression={y=tr}},col sep=comma] {benchmark.txt} ;
\end{axis}
\end{tikzpicture}
~
\begin{tikzpicture}[scale=0.62]
\begin{axis}[filter discard warning=false, enlargelimits=false, enlarge x limits=0.02, 
title style={at={(0.5,-0.25)},anchor=north,yshift=-0.1},
xlabel={\large \textbf{MAPS $\eta$}},
ylabel={\large \textbf{Test error (\%MR)}},%ymin=0.2,ymax=0.9, 
every axis y label/.style={at={(ticklabel cs:0.5)},rotate=90,anchor=near ticklabel}, 
legend style={at={(1.32,0.7)},anchor=north},
width=5.5cm
]
\addplot[line width=2pt,color=black] table [x index=4, y={create col/linear regression={y=ts}}, col sep=comma] {benchmark.txt} ;
\addplot[line width=2pt,color=color2] table [x index=3, y={create col/linear regression={y=ts}}, col sep=comma] {benchmark.txt} ;
\addplot[scatter,only marks,mark options={scale=0.4},scatter src=explicit symbolic,scatter/classes={
            x={mark=*,color=black}
        }] table [x index=4, y index=6, col sep=comma,meta index = 7] {benchmark.txt} ;
\addplot[scatter,only marks,mark options={scale=0.4},scatter src=explicit symbolic,scatter/classes={
            x={mark=*,color=color2}
        }] table [x index=3, y index=6, col sep=comma,meta index = 7] {benchmark.txt} ;
\legend{\Large Exact, \Large APP-5}
\end{axis}
\end{tikzpicture}

\caption{Regression on network accuracy for exact and approximated count in black and color, respectively. The approximated count averages lower and upper bounds. As the XOR size increases, these regressions become more parallel, and thus the approximate count regression is more accurate.}
\label{fig:accuracy}
\end{figure}
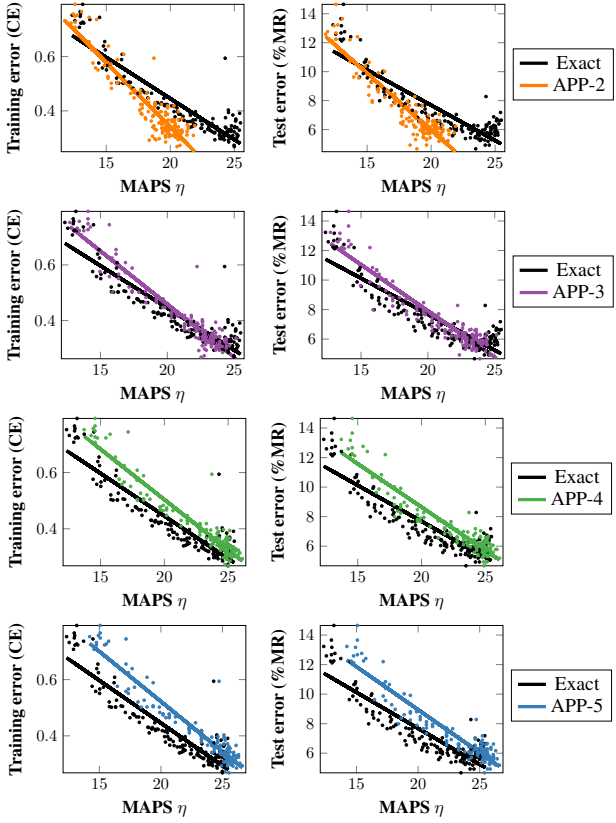

\section{Conclusion}

This paper introduced methods to obtain tighter bounds on the number of linear regions. 
These methods are considerably faster than direct enumeration, 
%contribute to the MILP literature on model counting, 
entail a probabilistic lower bound algorithm to count MILP solutions, 
and help understanding how ReLUs partition the input space.  

%\begin{itemize}
%\item 
%The upper bound refines the best known result %for the network configuration 
%by using coefficients of the trained network. 
%By analyzing how the network coefficients affect when each unit can be active, 
%we break the commonly used theoretical assumption that the activation hyperplane of each unit intersects every linear region defined by the previous layers.
%The resulting bound is particularly stronger when the network has a narrow layer, hence evidencing 
%an even stronger bottleneck effect. 
%that the bottleneck effected %identified %by~\citeauthor{BoundingCounting}~\citeyearpar{BoundingCounting} 
%can be even stronger in those cases. 

Prior work on bounding the number of linear regions has first focused on the benefit of depth~\cite{Pascanu,Montufar14}, and then on the bottleneck effect that is caused by a hidden layer that is too narrow~\cite{Montufar17} and more generally by small activation sets~\cite{BoundingCounting}. 
In our work, we looked further into how many units can possibly be active or not by taking into account the weights and the bias of each ReLU, which allows us to identify stable units. Stable units do not contribute as significantly to the number of linear regions. Consequently, we found that the bottleneck effect in the upper bound is even stronger in narrow layers, as evidenced in both extremes of Table~\ref{tab:ub_gap}.

%\item 
The probabilistic lower bound is based on universal hashing functions to sample activation patterns, 
and 
%The algorithm 
more generally allows us to estimate the number of solutions on binary variables of MILP formulations. 
By exploiting callbacks that are typical of MILP solvers, 
the number of solver calls in the proposed algorithm \texttt{MIPBound} does not depend on the 
%range of 
number of sizes for which we evaluate the probabilistic bounds 
%as it does 
like 
in related work. 
%Prior work on approximating the solution set of MILP formulations has only provided an upper bound~\cite{UpperMIP}. 
The algorithm is orders of magnitude faster than exact counting on networks with a large number of linear regions. 
These bounds can be parameterized for a balance between precision and speed. 
Nevertheless, we noted that 
%the bounds obtained %for different networks 
%preserve a certain ordering in their sizes as we make the estimate more precise. 
%One insight from the experiments is that
lower bounds from XOR constraints of size 2, which are faster to compute but not as accurate, can be used to compare relative expressiveness since the curves from XOR-2 to XOR-5 have a very similar shape.

%\item 

When upper and lower bounds are combined, 
the weakest approximation still preserves a negative correlation with accuracy, 
hence indicating that it may suffice to compare networks for  relative expressiveness. 
It is important to note that we do not need the exact count to compare two networks in terms of 
expressiveness or performance. Nevertheless, the stronger approximations produce more precise correlations, 
%with the network accuracy, 
which is evidenced by the more parallel regressions and thus more stable gaps across network sizes. 

%\end{itemize}

%Furthermore, we believe that the theoretical results and methods in this work may facilitate future work on evaluating and understanding the expressiveness of piecewise linear networks. 

%Prior work on approximating MILP solution sets has been restricted to an upper bound~\cite{UpperMip}.

\appendix

\section{Convex Outer Approximation of a ReLU}\label{app:outer}

\begin{figure}[b]
\centering
\includegraphics[scale=0.19]{}
\caption{%Using the same representation as in Figure~\ref{fig:outer}, 
%we illustrate how the 
The projected inequalities (\ref{eq:h_pos}), (\ref{eq:h_ub}), (\ref{eq:slope_lb}), and (\ref{eq:upper_diagonal}) define the convex outer approximation on $(g_i^l, h_i^l)$.}\label{fig:projected}
\end{figure}

\begin{reflem}{lem:outer}
If $H_i^l = \arg \max_{h^{l-1}} \{ W_i^l h^{l-1} + b_i^l \} \geq 0$ and $\bar{H}_i^l = \arg \max_{h^{l-1}} \{ - W_i^l h^{l-1} - b_i^l \} \geq 0$, then the linear relaxation of (\ref{eq:mip_unit_begin})--(\ref{eq:mip_unit_end}) defines the convex outer approximation on $(g_i^l, h_i^l)$.
\end{reflem}
\begin{proof}
We begin with the linear relaxation of the formulation defined by constraints (\ref{eq:mip_after_Wb_begin})--(\ref{eq:mip_unit_end}): 
%from Section~\ref{sec:mip}:
\begin{align}
    g_i^l = h_i^l - \bar{h}_i^l  & \qquad \Leftrightarrow \qquad \bar{h}_i^l = h_i^l - g_i^l \label{eq:hbar_linear} \\
    h_i^l \leq H_i^l z_i^l & \qquad \Leftrightarrow \qquad z_i^l \geq \frac{h_i^l}{H_i^l} \label{eq:H_ub} \\
    \bar{h}_i^l \leq \bar{H}_i^l (1-z_i^l) & \qquad \Leftrightarrow \qquad z_i^l \leq 1 - \frac{\bar{h}_i^l}{\bar{H}_i^l} \label{eq:H_lb} \\
    h_i^l \geq 0 \label{eq:h_pos} \\
    \bar{h}_i^l \geq 0 \label{eq:hbar_pos} \\
    0 \leq z_i^l \leq 1 \label{eq:z_01}
\end{align}
We first project $z_i^l$ out by isolating that variable on one side of each inequality, 
and then combining every lower bound with every upper bound. 
Hence, we replace (\ref{eq:H_ub}), (\ref{eq:H_lb}), and (\ref{eq:z_01}) with:
\begin{align}
    \frac{h_i^l}{H_i^l} \leq  1 - \frac{\bar{h}_i^l}{\bar{H}_i^l} & \qquad \Leftrightarrow \qquad \bar{h}_i^l \leq \bar{H}_i^l \left( 1 - \frac{h_i^l}{H_i^l} \right) \label{eq:hbar_clumsy} \\
    \frac{h_i^l}{H_i^l} \leq 1 & \qquad \Leftrightarrow \qquad h_i^l \leq H_i^l \label{eq:h_ub} \\
    0 \leq 1 - \frac{\bar{h}_i^l}{\bar{H}_i^l} & \qquad \Leftrightarrow \qquad \bar{h}_i^l \leq \bar{H}_i^l \label{eq:hbar_ub} \\
    0 \leq 1 \label{eq:taut}
\end{align}
Next, we project $\bar{h}_i^l$ through the same steps, also combining the equality with the lower and upper bounds on the variable. 
Hence, we replace (\ref{eq:hbar_linear}), (\ref{eq:hbar_pos}), (\ref{eq:hbar_clumsy}), and (\ref{eq:hbar_ub}) with:
\begin{align}
    h_i^l - g_i^l \geq 0 \label{eq:slope_lb} \\
    h_i^l - g_i^l \leq \bar{H}_i^l \left( 1 - \frac{h_i^l}{H_i^l} \right) \label{eq:upper_diagonal} \\
    h_i^l - g_i^l \leq \bar{H}_i^l \label{eq:slope_ub} \\
    \bar{H}_i^l \left( 1 - \frac{h_i^l}{H_i^l} \right) \geq 0 \label{eq:redund} \\
    \bar{H}_i^l \geq 0 \label{eq:hbar_ub_pos}
\end{align}
We drop (\ref{eq:taut}) as a tautology and (\ref{eq:hbar_ub_pos}) as implicit on our assumptions. Similarly, for $\bar{H}_i^l > 0$, inequality (\ref{eq:redund}) is equivalent to (\ref{eq:h_ub}). Therefore, we are left with (\ref{eq:h_pos}), (\ref{eq:h_ub}), (\ref{eq:slope_lb}), (\ref{eq:upper_diagonal}), and (\ref{eq:slope_ub}). 
We show in Figure~\ref{fig:projected} that the first four inequalities define the convex outer approximation on $(g_i^l, h_i^l)$, 
whereas (\ref{eq:slope_ub}) is active at $(-\bar{H}_i^l,0)$ and $(H_i^l,H_i^l+\bar{H}_i^l)$ and thus dominated by (\ref{eq:upper_diagonal}) in that region.
\end{proof}

%\section{Convexification of a layer with bounded inputs}\label{ap:union}
%
%Since every layer output is formulated above as only depending on the layer inputs, 
%we can define an independent branching scheme~\cite{IBS} of the form
%\begin{align}
%\bigcap_{i=1, \ldots, n_l} \left[
%\left(
%\begin{array}{c}
     %W_i^l h^{l-1} + b_i^l \leq 0  \\
     %h_i^l = 0 
%\end{array}
%\right) 
%\bigcup
%\left(
%\begin{array}{c}
     %h_i^l = W_i^l h^{l-1} + b_i^l  \\
     %h_i^l \geq 0 
%\end{array}
%\right)
%\right]
%\end{align}
%and then convexify each unit independently from the other units. 
%
%For each unit, 
%we can use the Balas formulation for the convex hull of union of polytopes~\cite{BalasFormula,BalasFormulaOptimal}. 

\section{Parity Constraints on 0--1 Variables}\label{ap:xor}

Similarly to the case of SAT formulas, we need to find a suitable way of translating a XOR constraint to a MILP formulation. 
This is also discussed in 
~\cite{ErmonOpt} 
for a related application of probabilistic inference. 
Let $w$ be the set of binary variables and $U \subseteq V$ the set of indices of $w$ variables of a XOR constraint. 
To remove all assignments with an even sum in $U$, 
we use a family of canonical cuts on the unit hypercube~\cite{CanonicalCuts}:
\begin{equation}
    \sum_{i \in U'} w_i - \sum_{i \in U \setminus U'} w_i \leq |U'| - 1 ~ \forall U' \subseteq U : |U'| \text{ even},
\end{equation}
which is effectively separating each such assignment with one constraint. Although exponential in $k$, 
%\citeauthor{ParityPolytope}~\citeyearpar{ParityPolytope} has shown that 
each of those constraints -- and only those -- are necessary to define a convex hull of the feasible assignments 
in the absence of other constraints~\cite{ParityPolytope}. 
However, we note that we can do better when $k=2$ by using
\begin{equation}
    w_i + w_j = 1 \qquad \text{ if U = \{ i, j \}}.
\end{equation}
%Due to the multiple XOR constraints used and the small $k$, 
%we avoid moving away from the original space of variables. 
%Alternatively, %\citeauthor{ExtendedFormulations}~\citeyearpar{ExtendedFormulations} provides an 
%it is possible to use an 
%extended formulation requiring a polynomial number of constraints~\cite{ExtendedFormulations}. 
%
%We note that these two possibilities have also been discussed 

\section{Deriving the Lower Bound Probabilities of Algorithm~\ref{alg:mip_bound}}\label{ap:probs}

The probabilities given to the lower bounds by Algorithm~\ref{alg:mip_bound} are due to the main result for MBound in~\cite{MBound}, %\citeauthor{MBound}~\citeyearpar{MBound}, 
which is based on the following parameters: 
XOR size $k$; number of restrictions $r$; loop repetitions $i$; number of repetitions that remain feasible after $j$ restrictions $f[j]$; 
deviation $\delta \in (0, 1/2]$; and precision slack $\alpha \geq 1$. 
We choose the values for the latter two. 

A strict lower bound of $2^{r-\alpha}$ can be defined 
if 
\begin{equation}\label{eq:lb_condition}
    f[j] \geq i.(1/2+\delta),
\end{equation}
and for $\delta \in (0,1/2)$ 
it holds with probability $1 - \left(\frac{e^{\beta}}{(1+\beta)^{1+\beta}}\right)^{i/2^\alpha}$ 
for $\beta = 2^\alpha.(1/2+\delta)-1$. 
We choose $\alpha = 1$, 
hence making $\beta = 2.\delta$, 
and then set $\delta$ to the largest value satisfying condition~(\ref{eq:lb_condition}). 

\section{Previous Bounds on the Maximum Number of Linear Regions}\label{app:bounds}

We can obtain a bound for deep networks by recursively combining the bounds obtained on each layer. 
By assuming that every linear region defined by the first $l-1$ layers 
is then subdivided into the maximum possible number of linear regions defined by the activation hyperplanes of layer $l$, 
we obtain the implicit bound of $\prod_{l=1}^L \sum_{j=0}^{n_{l-1}} \binom{n_l}{j}$ %from \citeauthor{Raghu}~\citeyearpar{Raghu}. 
in \cite{Raghu}. 
By observing that the dimension of the input of layer $l$ on each linear region 
is also constrained by the smallest input dimension among layers $1$ to $l-1$, 
we obtain the bound in~\cite{Montufar17} %in \citeauthor{Montufar17}~\citeyearpar{Montufar17} 
of $\prod_{l=1}^L \sum_{j=0}^{d_l} \binom{n_l}{j}$, where $d_l = \min\{n_0, n_1, \ldots, n_l\}$. 
If we refine the effect on the input dimension by also considering that the number of units that are active on each layer varies across the linear regions, 
we obtain the tighter bound in %\citeauthor{BoundingCounting}~\citeyearpar{BoundingCounting} 
\cite{BoundingCounting} 
of $\sum_{(j_1,\ldots,j_L) \in J} \prod_{l=1}^L \binom{n_l}{j_l}$, where $J = \{(j_1, \ldots, j_L) \in \mathbb{Z}^L: 0 \leq j_l \leq \min\{n_0, n_1 - j_1, \ldots, n_{l-1} - j_{l-1}, n_l\}\ \forall l \in \sL \}$.

\section{Average-Case Complexity of Computing $A_l(k)$ and $I_l(k)$ for Any $k$}\label{app:linear}

%For example, 
If we assume that each row of $\mW^l$ and vector $\vb^l$ have about the same number of positive and negative elements, 
then we can expect that each set $I(l-1,j)$ contains half of the units in $U_l$. 
If these positive and negative elements are distributed randomly for each unit, 
then a logarithmic number of the units in layer $l-1$ being active may suffice 
to cover $U_l$. In such case, we would explore $O(n_{l-1})$ subsets on average.

\section{Approximation Algorithms for Computing $A_l(k)$ and $I_l(k)$}
\label{ap:submodular_bounds}
In Section~\ref{sec:ub}, the assume these bounds for $\mathcal{I}_l(k)$ and $\mathcal{A}_l(k)$:
\begin{equation*}
\begin{scriptsize}
\mathcal{I}_l(k) \leq
\max\limits_{S} \left\{ \left| \bigcup\limits_{j \in S} I(l-1,j)  \right| : S \subseteq \{ 1, \ldots, n_{l-1} \}, |S| \leq k \right\}
\end{scriptsize}
\end{equation*}

\begin{align*}
\mathcal{A}_l(k) \leq & n^+_l + |U^+_l| + \\ & \max\limits_{S} \left\{ \left| \bigcup\limits_{j \in S} I^-(l-1,j)  \right| : S \subseteq \{ 1, \ldots, n_{l-1} \}, |S| \leq k \right\}
\end{align*}

The maximization terms on the right hand side of the inequalities for $\mathcal{I}_l(k)$ and $\mathcal{A}_l(k)$ can be seen as finding a set of $k$ subsets of the form $I(l-1,j)$ or $I^{-}(l-1,j)$, respectively, and whose union achieves the largest cardinality. This can be shown to be directly related to the maximum k-coverage problem with $(1-\frac{1}{e})-$approximation using an efficient greedy algorithm~\cite{Feige1998}. Note that the maximum k-coverage problem is actually a special case of the maximization of submodular functions, which are discrete analogue of convex functions~\cite{George_Nemhauser_MP1978}. For large networks, the use of greedy algorithms can be beneficial to get good approximations for $\mathcal{I}_l(k)$ and $\mathcal{A}_l(k)$ efficiently.

\section{Additional Information on Networks and Experiments}
\label{ap:exps}

To compare with exact counting, the networks used in the experiments are those originally described in~\cite{BoundingCounting}.  %\citeauthor{BoundingCounting}~\citeyearpar{BoundingCounting}. 
They consist of rectifier networks trained on the MNIST benchmakrk dataset~\cite{MNIST}, 
each with two hidden layer summing to 22 units and an output layer having another 10 units. 
For each possible configurations of units in the hidden layers, 
there are 10 distinct networks that were trained for 20 epochs. 
The linear regions considered are in the $[0,1]^{n_0}$ box of the input $x$. 
In the cases where no layer has 3 or less units, the number of linear regions was shown to relate to the network accuracy.

In our experiments, the code is written in C++ (gcc 4.8.4) using CPLEX Studio 12.8 as a solver and ran in Ubuntu 14.04.4 on a machine with 40 Intel(R) Xeon(R) CPU E5-2640 v4 @ 2.40GHz processors and 132 GB of RAM.

\paragraph{Acknowledgements} We would like to thank  Christian Tjandraatmadja, Michele Lombardi, and Tiago F. Tavares for their helpful suggestions and comments on this work.

\bibliographystyle{aaai}
\bibliography{empirical_bounds}

\end{document}